\newtheorem{theorem}{Theorem}
\newtheorem{lemma}[theorem]{Lemma}
\newcommand{\M}{\mathcal M}
\newcommand{\N}{\mathcal N}
\newcommand{\G}{\mathcal G}
\newcommand{\R}{\mathbb {R}}
\newcommand{\E}{\mathbb {E}}
\newcommand{\nn}{\nonumber}
\newcommand{\mse}{\mathrm{MSE}}
\newcommand{\tr}{\mathrm{Tr}}
\newcommand{\one}{{\bf{1}}}
\newcommand{\J}{J}
\newcommand{\x}{{\bf x}}
\newcommand{\z}{{\bf z}}
\newcommand{\s}{{\bf s}}
\newcommand{\y}{{\bf y}}
\newcommand{\yh}{{\bf \hat y}}
\newcommand{\sh}{{\bf \hat s}}
\newcommand{\bb}{{\bf b}}
\newcommand{\bc}{{\bf c}}
\newcommand{\bM}{{\bf M}}
\newcommand{\bK}{{\bf K}}
\newcommand{\bt}{{\bf \Theta}}
\newcommand{\bW}{{\bf W}}
\newcommand{\bI}{{\bf I}}
\newcommand{\bC}{{\bf C}}
\newcommand{\bS}{{\bf S}}
\newcommand{\bY}{{\bf Y}}
\newcommand{\bX}{{\bf X}}
\newcommand{\bL}{{\bf L}}
\newcommand{\bG}{{\bf G}}
\newcommand{\bD}{{\bf D}}
\newcommand{\bV}{{\bf V}}
\newcommand{\bB}{{\bf B}}
\newcommand{\bQ}{{\bf Q}}
\newcommand{\bp}{{\bf \Phi}}
\newcommand{\bl}{{\bf \Lambda}}
\begin{document}

%%%%%%%%% TITLE
\title{On the Global Optima of Kernelized Adversarial Representation Learning}
\author{Bashir Sadeghi\\
Michigan State University\\
{\tt\small sadeghib@msu.edu}
\and
Runyi Yu\\
Eastern Mediterranean University \\
{\tt\small yu@ieee.org}
\and
Vishnu Naresh Boddeti\\
Michigan State University\\
{\tt\small vishnu@msu.edu}
}

\maketitle
\thispagestyle{empty}

% abstract.tex

\begin{abstract}
Adversarial representation learning is a promising paradigm for obtaining data representations that are invariant to certain sensitive attributes while retaining the information necessary for predicting target attributes. Existing approaches solve this problem through iterative adversarial minimax optimization and lack theoretical guarantees. In this paper, we first study the ``linear" form of this problem i.e., the setting where all the players are linear functions. We show that the resulting optimization problem is both non-convex and non-differentiable. We obtain an exact closed-form expression for its global optima through spectral learning and provide performance guarantees in terms of analytical bounds on the achievable utility and invariance. We then extend this solution and analysis to non-linear functions through kernel representation. Numerical experiments on UCI, Extended Yale B and CIFAR-100 datasets indicate that, (a) practically, our solution is ideal for ``imparting" provable invariance to any biased pre-trained data representation, and (b) empirically, the trade-off between utility and invariance provided by our solution is comparable to iterative minimax optimization of existing deep neural network based approaches. Code is available at \url{https://github.com/human-analysis/Kernel-ARL}
\end{abstract}

% introduction.tex

\section{Introduction}

Adversarial representation learning (ARL) is a promising framework for training image representation models that can control the information encapsulated within it. ARL is practically employed to learn representations for a variety of applications, including, unsupervised domain adaptation of images \cite{ganin2015unsupervised}, censoring sensitive information from images \cite{edwards2015censoring}, learning fair and unbiased representations \cite{louizos2015variational,madras2018learning}, learning representations that are controllably invariant to sensitive attributes \cite{xie2017controllable} and mitigating unintended information leakage \cite{roy2019mitigating}, amongst others.
\begin{figure}[t]
    \centering
    \includegraphics[width=0.4\textwidth]{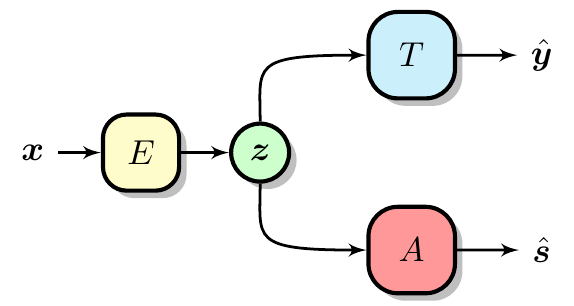}
    \caption{\textbf{Adversarial Representation Learning} consists of three entities, an encoder $E$ that obtains a compact representation $\bm{z}$ of input data $\bm{x}$, a predictor $T$ that predicts a desired target attribute $\bm{y}$ and an adversary that seeks to extract a sensitive attribute $\bm{s}$, both from the embedding $\bm{z}$.\label{fig:arl}}
\end{figure}

At the core of the ARL formulation is the idea of jointly optimizing three entities: (i) An encoder $E$ that seeks to distill the information from input data and retains the information relevant to a target task while \emph{intentionally} and \emph{permanently} eliminating the information corresponding to a sensitive attribute, (ii) A predictor $T$ that seeks to extract a desired target attribute, and (iii) A proxy adversary $A$, playing the role of an unknown adversary, that seeks to extract a known sensitive attribute. Figure \ref{fig:arl} shows a pictorial illustration of the ARL problem.

Typical instantiations of ARL represent these entities through non-linear functions in the form of deep neural networks (DNNs) and formulate parameter learning as a minimax optimization problem. Practically, optimization is performed through simultaneous gradient descent, wherein, small gradient steps are taken concurrently in the parameter space of the encoder, predictor and proxy adversary. The solutions thus obtained have been effective in learning data representations with controlled invariance across applications such as image classification \cite{roy2019mitigating}, multi-lingual machine translation \cite{xie2017controllable} and domain adaptation \cite{ganin2015unsupervised}.

Despite its practical promise, the aforementioned ARL setup suffers from a number of drawbacks:

\vspace{3pt}
\noindent\textbf{--} The minimax formulation of ARL leads to an optimization problem that is non-convex in the parameter space, both due to the adversarial loss function as well as due to the non-linear nature of modern DNNs. As we show in this paper, even for simple instances of ARL where each entity is characterized by a linear function, the problem remains non-convex in the parameter space. Similar observations \cite{nagarajan2017gradient} have been made in a different but related context of adversarial learning in generative adversarial networks (GANs) \cite{goodfellow2014generative}.

\vspace{3pt}
\noindent\textbf{--} Current paradigm of simultaneous gradient descent to solve the ARL problem provides no provable guarantees while suffering from instability and poor convergence \cite{roy2019mitigating, madras2018learning}. Again, similar observations on such limitations have been made \cite{mescheder2017numerics,nagarajan2017gradient} in the context of GANs.

\vspace{3pt}
\noindent\textbf{--} In applications of ARL related to fairness, accountability and transparency of machine learning models, it is critically important to provide performance bounds in addition to empirical evidence of model efficacy. A major shortcoming of existing DNN based ARL solutions is the lack of theoretical analysis or provable bounds on achievable utility and fairness.

In this paper, we take a step back and analytically study the simplest version of the ARL problem from an optimization perspective with the goal of addressing the aforementioned drawbacks. Doing so enables us to delineate the contributions of the expressivity of the entities in ARL (i.e., shallow vs deep models) and the challenges of optimizing the parameters (i.e., local optima through simultaneous gradient descent vs global optima).

\begin{figure}
    \centering
    \includegraphics[width=0.5\textwidth]{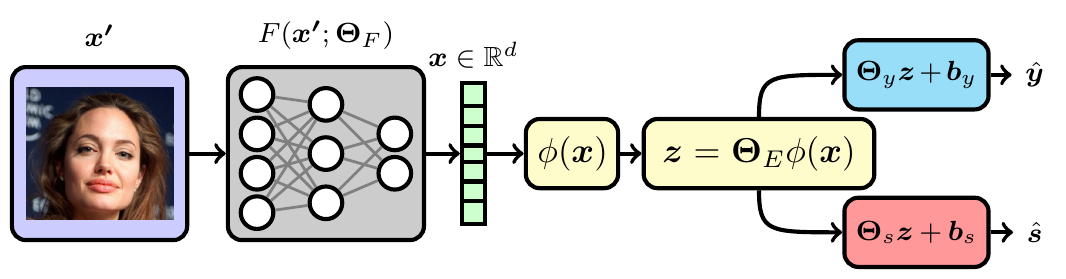}
    \caption{\textbf{Overview:} Illustration of adversarial representation learning for imparting invariance to a fixed biased pre-trained image representation $\bm{x}=F(\bm{x'};\bm{\Theta}_F)$. An encoder $E$, in the form of a kernel mapping, produces a new representation $\bm{z}$. A target predictor and an adversary, in the form of linear regressors, operate on this new representation. We theoretically analyze this ARL setup to obtain a closed form solution for the globally optimal parameters of the encoder $\bm{\Theta}_E$. Provable bounds on the achievable trade-off between the utility and fairness of the representation are also derived.\label{fig:overview}}
\end{figure}

\vspace{5pt}
\noindent \textbf{Contributions:} We first consider the ``linear" form of ARL, where the encoder is a linear transformation, the target predictor is a linear regressor and proxy adversary is a linear regressor. We show that this Linear-ARL leads to an optimization problem that is both non-convex and non-differentiable. Despite this fact, by reducing it into a set of trace problems on a Stiefel manifold, we obtain an exact closed form solution for the global optima. As part of our solution, we also determine optimal dimensionality of the embedding space. We then obtain analytical bounds (lower and upper) on the target and adversary objectives and prescribe a procedure to explicitly control the maximal leakage of sensitive information. Finally, we extend the Linear-ARL formulation to allow non-linear functions through a kernel extension while still enjoying an exact closed-form solution for the global optima. Numerical experiments on multiple datasets, both small and large scale, indicate that the global optima solution for the linear and kernel formulations of ARL are competitive and sometimes even outperform DNN based ARL trained through simultaneous stochastic gradient descent. Practically, we also demonstrate the utility of Linear-ARL and Kernel-ARL for ``imparting" provable invariance to any biased pre-trained data representation. Figure \ref{fig:overview} provides an overview of our contributions. We refer to our proposed algorithm for obtaining the global optima as Spectral-ARL and abbreviate it as SARL.

\vspace{5pt}
\noindent \textbf{Notation:} Scalars are denoted by regular lower case or Greek letters, \eg $n$, $\lambda$. Vectors are denoted by boldface lowercase letters, \eg $\x$, $\y$. Matrices are uppercase boldface letters, \eg $\bX$. A $k\times k$ identity matrix is denoted by $\bI_k$ or $\bI$.
Centered (mean subtracted w.r.t. columns) data matrix is indicated by ``\textasciitilde", \eg $\tilde{\bX}$. Assume that $\bX$ contains $n$ columns, then  $\tilde{\bX} = \bX \bD $, where $\bD = \bI_n - \frac{1}{n}\one \one^T$ and $\one$ denotes the vector of ones with length of $n$. Given matrix $\bM\in \R^{m\times m}$, we use $\tr[\bM]$ to denote its trace (i.e., the sum of its diagonal elements); its Frobenius norm is denoted by $\|\bM\|_F$, which is related to the trace as $\|\bM\|_F^2 =\tr[\bM \bM^T]=\tr[\bM^T \bM]$. The subspace spanned by the columns of $\bM$ is denoted by $\mathcal {R}(\bM)$ or simply  $\M$ (in calligraphy); the orthogonal complement of $\M$ is denoted by $\M^\perp$. The null space of $\bM$ is denoted by $\N(\bM)$. The orthogonal projection onto $\M$ is $P_\M=\bM(\bM^T \bM)^\dagger \bM^T$, where superscript ``$^\dagger$" indicates the Moore-Penrose pseudo inverse ~\cite{laub2005matrix}.

Let $\x \in \R^{d}$ be a random vector. We denote its expectation by $\E[\x]$, and its covariance matrix by $\bC_{x} \in \R^{d\times d}$ as $\bC_{x}=\E\big[(\x-\E[\x])(\x-\E[\x])^T\big]$. Similarly, the cross-covariance $\bC_{xy}\in\R^{d\times r}$ between $\x\in\R^d$ and $\y\in\R^r$ is denoted as $\bC_{xy}=\E\big[(\x-\E[\x])(\y-\E[\y])^T\big]$.

For a $d\times d$ positive definite matrix $\bC\succ0$, its Cholesky factorization results in a full rank matrix $\bQ\in \R^{d\times d}$ such that
\begin{equation}\label{Cholesky}
\bC = \bQ^T \bQ
\end{equation}

% related-work.tex

\section{Prior Work}

\vspace{5pt}
\noindent \textbf{Adversarial Representation Learning:} In the context of image classification, adversarial learning has been utilized to obtain representations that are invariant across domains \cite{ganin2015unsupervised, ganin2016domain, tzeng2017adversarial}. Such representations allow classifiers that are trained on a source domain to generalize to a different target domain. In the context of learning fair and unbiased representations, a number of approaches \cite{edwards2015censoring,zhang2018mitigating,beutel2017data,xie2017controllable,mirjalili2018semi,roy2019mitigating,bertran2019adversarially} have used and argued~\cite{madras2018learning} for explicit adversarial networks\footnote{Proxies at training to mimic unknown adversaries during inference.}, to extract sensitive attributes from the encoded data. With the exception of \cite{roy2019mitigating} all the other methods are set up as a minimax game between the encoder, a target task and the adversary. The encoder is setup to achieve fairness by maximizing the loss of the adversary i.e. minimizing negative log-likelihood of sensitive variables as measured by the adversary. Roy \etal~\cite{roy2019mitigating} identify and address the instability in the optimization in the zero-sum minimax formulation of ARL and propose an alternate non-zero sum solution, demonstrating significantly improved empirical performance. All the above approaches use deep neural networks to represent the ARL entities, optimize their parameters through simultaneous stochastic gradient descent, and rely on empirical validation. However, none of them seek to study the nature of the ARL formulation itself i.e., in terms of decoupling the role of the expressiveness of the models and convergence/stability properties of the optimization tools for learning the parameters of said models. Therefore, we seek to bridge this gap by studying simpler forms of ARL from a global optimization perspective.

\vspace{5pt}
\noindent\textbf{Privacy, Fairness and Invariance:} Concurrent work on learning fair or invariant representations of data included an encoder and a target predictor but did not involve an explicit adversary. The role of the adversary is played by an explicit hand designed objective that, typically, competes with that of the target task. The concept of learning fair representations was first introduced by Zemel \etal~\cite{zemel2013learning}. The goal was to learn a representation of data by ``fair clustering" while maintaining the discriminative features of the prediction task. Building upon this work, many techniques have been proposed to learn an unbiased representation of data while retaining its effectiveness for a prediction task. These include the Variational Fair Autoencoder~\cite{louizos2015variational} and the more recent information bottleneck based objective by Moyer \etal~\cite{moyer2018invariant}. As with the ARL methods above, these approaches rely on empirical validation. Neither of them study their respective non-convex objectives from an optimization perspective, nor do they provide any provable guarantees on achievable trade-off between fairness and utility. The competing nature of the objectives considered in this body of work shares resemblance to the non-convex objectives that we study in this paper. Though it is not our focus, the approach presented here could potentially be extended to analyze the aforementioned methods.

\vspace{5pt}
\noindent\textbf{Optimization Theory for Adversarial Learning:} The non-convex nature of the ARL formulation poses unique challenges from an optimization perspective. Practically, the parameters of the models in ARL are optimized through stochastic gradient descent, either jointly \cite{edwards2015censoring, mescheder2017numerics} or alternatively \cite{ganin2015unsupervised}, with the former being a generalization of gradient descent. While the convergence properties of gradient descent and its variants are well understood, there is relatively little work on the convergence and stability of simultaneous gradient descent in adversarial minimax problems. Recently, Mescheder \etal~\cite{mescheder2017numerics} and Nagarajan \etal~\cite{nagarajan2017gradient} both leveraged tools from non-linear systems theory \cite{khalil1996nonlinear} to analyze the convergence properties of simultaneous gradient descent, in the context of GANs, around a given equilibrium. They show that without the introduction of additional regularization terms to the objective of the zero-sum game, simultaneous gradient descent does not converge. However, their analysis is restricted to the two player GAN setting and is not concerned with its global optima.

In the context of fair representation learning, Komiyama \etal~\cite{komiyama2018nonconvex} consider the problem of enforcing fairness constraints in linear regression and provide a solution to obtain the global optima of the resulting non-convex problem. While we derive inspiration from this work, our problem setting and technical solution are both notably different. Specifically, their approach does not involve, (1) an explicit adversary as a measure of sensitive information in the representation, and (2) an encoder tasked with disentangling and discarding the sensitive information in the data.

% approach.tex

\section{Adversarial Representation Learning}

Let the data matrix $\bX=[\x_1,\dots, \x_n] \in {\R}^{d\times n}$ be $n$ realizations of $d$-dimensional data $\x\in \R^d$. Assume that $\x$ is associated with a sensitive attribute $\s\in\R^q$  and a target attribute $\y\in \R^p$. We denote $n$ realizations of sensitive and target attributes as $\bS=[\s_1,\cdots,\s_n]$ and $\bY=[\y_1,\cdots,\y_n]$, respectively. Treating the attributes as vectors enables us to consider both multi-class classification and regression under the same setup.

\subsection{Problem Setting}
The adversarial representation learning problem is formulated with the goal of learning parameters of an embedding function $E(\cdot;\bt_E): \x \mapsto \z$ with two objectives: (i) aiding a target predictor $T(\cdot;\bt_y)$ to accurately infer the target attribute $\y$ from $\z$, and (ii) preventing an adversary $A(\cdot;\bt_s)$ from inferring the sensitive attribute $\s$ from $\z$. The ARL problem can be formulated as,
\begin{equation}
    \begin{aligned}
        \min_{\bt_E} \min_{\bt_y} & \ \ \mathcal{L}_y\left( T(E(\x;\bt_E);\bt_y), \y \right) \\
        \mathrm {s.t. \ \  } & \min_{\bt_s} \ \mathcal{L}_s\left( A(E(\x;\bt_E);\bt_s), \s \right) \geq \alpha
    \end{aligned}
    \label{eq:arl}
\end{equation}
\noindent where $\mathcal{L}_y$ and $\mathcal{L}_s$ are the loss functions (averaged over training dataset) for the target predictor and the adversary, respectively, $\alpha \in [0,\infty)$ is a user defined value that determines the minimum tolerable loss for the adversary on the sensitive attribute, and the minimization in the constraint is equivalent to the encoder operating against an optimal adversary. Existing instances of this problem adopt deep neural networks to represent $E$, $T$ and $A$ and learn their respective parameters $\{\bt_E, \bt_y, \bt_s\}$ through simultaneous SGD.

\subsection{The Linear Case}\label{sec-linear}
We first consider the simplest form of the ARL problem and analyze it from an optimization perspective. We model both the adversary and the target predictors as linear regressors,
\begin{equation}\label{estimators}
\yh = \bt_y \z + \bb_y, \qquad  \sh = \bt_s \z + \bb_s
\end{equation}
\noindent where $\z$ is an encoded version of $\x$, and $\yh$ and $\sh$ are the predictions corresponding to the target and sensitive attributes. We also model the encoder through a linear mapping,
\begin{equation}\label{encoder}
\bt_E\in \R^{r\times d} \quad  \colon \quad  \x\mapsto \z=\bt_E \x
\end{equation}
\noindent where $r < d$ is the dimensionality\footnote{When $r$ is equal to $d$, the encoder will be unable to guard against the adversary who can simply learn to invert $\bt_E$.} of the projected space. While existing DNN based solutions select $r$ on an ad-hoc basis, our approach for this problem determines $r$ as part of our solution to the ARL problem. For both adversary and target predictors, we adopt the mean squared error (MSE) to assess the quality of their respective predictions i.e., $\mathcal{L}_y(\y,\yh)=\E[\|\y-\yh\|^2]$ and $\mathcal{L}_s(\s,\sh)=\E[\|\s-\sh\|^2]$.

\subsubsection{Optimization Problem}
For any given encoder $\bt_E$ the following Lemma\footnote{We defer the proofs of all lemmas and theorems to the appendix.} gives the minimum MSE for a linear regressor in terms of covariance matrices
and $\bt_E$. The following Lemma assumes that $\x$ is zero-mean and the covariance matrix $\bC_x$ is positive definite. These assumptions are not restrictive since we can always remove the mean and dependent features from $\x$.

\begin{lemma}\label{th1}
Let $\x$ and $\bm{t}$ be two random vectors with $\E[\x]=0$,  $\E[\bm{t}]=\bb$, and $\bC_x\succ0$. Consider a linear regressor, $\bm{\hat{t}} = \bW \z + \bb$, where $\bW \in \R^{m \times r}$ is the parameter matrix, and $\z\in\R^r$ is an encoded version of $\x$ for a given $\bt_E$: $\x  \mapsto \z =\bt_E \x, \quad \bt_E \in \R^{r\times d}$. The minimum MSE that can be achieved by designing $\bW$ is,
\[\min_{\bW} \E[\|\bm{t} - \bm{\hat{t}}\|^2 ]  = \tr \big[\bC_t \big] - \big\|P_\M \bQ_x^{-T} \bC_{xt} \big\|_F^2
\]
where $\bM = \bQ_x \bt_E^T \in \R^{d\times r}$, and $ \bQ_x \in \R^{d\times d}$ is a Cholesky factor of $\bC_x$ as shown in (\ref{Cholesky}).
\end{lemma}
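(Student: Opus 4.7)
\textbf{Proof Plan for Lemma \ref{th1}.}

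The plan is to treat the optimization as an unconstrained least squares problem in $\bW$, use its closed-form minimizer, and then rewrite the resulting expression by the prescribed Cholesky change of variables until an orthogonal projector appears. First I would expand the MSE directly. Because $\E[\x]=0$ and $\E[\bm{t}]=\bb$, the constant bias $\bb$ cancels and we obtain
\begin{equation*}
\E[\|\bm{t}-\bm{\hat{t}}\|^2] \;=\; \tr[\bC_t] \;-\; 2\tr\!\big[\bW\bt_E\bC_{xt}\big] \;+\; \tr\!\big[\bW\bt_E\bC_x\bt_E^T\bW^T\big].
\end{equation*}
This is a convex quadratic in $\bW$, so setting the gradient with respect to $\bW$ to zero yields the normal equation $\bW\,(\bt_E\bC_x\bt_E^T) = \bC_{tx}\bt_E^T$, whose minimum-norm solution is $\bW^\star = \bC_{tx}\bt_E^T\,(\bt_E\bC_x\bt_E^T)^{\dagger}$.

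Next I would substitute $\bW^\star$ back into the objective. The linear and quadratic terms combine cleanly because $\bW^\star\bt_E\bC_x\bt_E^T\bW^{\star T}=\bW^\star\bt_E\bC_{xt}$, leaving
\begin{equation*}
\min_{\bW}\E[\|\bm{t}-\bm{\hat{t}}\|^2] \;=\; \tr[\bC_t] \;-\; \tr\!\Big[\bC_{tx}\bt_E^T\,(\bt_E\bC_x\bt_E^T)^{\dagger}\,\bt_E\bC_{xt}\Big].
\end{equation*}
Now I would introduce the Cholesky factorization $\bC_x=\bQ_x^T\bQ_x$ and the change of variables $\bM=\bQ_x\bt_E^T$, so that $\bt_E=\bM^T\bQ_x^{-T}$. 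This gives the two identities $\bt_E\bC_x\bt_E^T=\bM^T\bM$ and $\bt_E\bC_{xt}=\bM^T\bQ_x^{-T}\bC_{xt}$, which together convert the trace term into
\begin{equation*}
\tr\!\Big[(\bQ_x^{-T}\bC_{xt})^T\,\bM(\bM^T\bM)^{\dagger}\bM^T\,(\bQ_x^{-T}\bC_{xt})\Big].
\end{equation*}
Recognising $\bM(\bM^T\bM)^{\dagger}\bM^T = P_\M$ as the orthogonal projector onto $\mathcal{R}(\bM)$ and using $P_\M = P_\M^T = P_\M^2$, the trace is exactly $\|P_\M\bQ_x^{-T}\bC_{xt}\|_F^2$, yielding the claimed formula.

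The main obstacle I anticipate is technical rather than conceptual: the matrix $\bt_E\bC_x\bt_E^T$ need not be invertible (for instance when $\bt_E$ is not full row rank), so care is required to justify using the Moore--Penrose pseudo-inverse in the optimality condition. I would address this by arguing that $\bC_{tx}\bt_E^T$ lies in the row space of $\bt_E\bC_x\bt_E^T$ (which follows from $\bC_x\succ 0$ and the fact that $\mathcal{R}(\bt_E\bC_x\bt_E^T)=\mathcal{R}(\bt_E)$), so the normal equation is always consistent and $\bW^\star$ above achieves the minimum regardless of rank. Equivalently, one can invoke the range identity $\mathcal{R}(\bM(\bM^T\bM)^{\dagger}\bM^T)=\mathcal{R}(\bM)$ to confirm that the projector form is well defined. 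Everything else reduces to routine trace manipulations and properties of the Cholesky factor $\bQ_x$, which is guaranteed to be invertible because $\bC_x\succ 0$.
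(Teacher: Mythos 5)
Your proof is correct and arrives at the paper's formula through what is at heart the same closed-form least-squares computation, but the mechanics differ in a way worth recording. The paper substitutes the Cholesky factor immediately and completes the square, rewriting the objective as $\big\|\bM\bW^T-\bQ_x^{-T}\bC_{xt}\big\|_F^2-\big\|\bQ_x^{-T}\bC_{xt}\big\|_F^2+\tr[\bC_t]$ with $\bM=\bQ_x\bt_E^T$; it then minimizes the first term via the orthogonal (Pythagorean) split of $\bQ_x^{-T}\bC_{xt}$ into components on $\M$ and $\M^\perp$, using $\bW^T=\bM^\dagger\bQ_x^{-T}\bC_{xt}$ and $\bM\bM^\dagger=P_\M$, so no differentiation and no discussion of consistency is ever needed: rank-deficient $\bM$ is handled automatically by the projection argument. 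You instead work in the original coordinates, differentiate the convex quadratic, solve the normal equation $\bW\,(\bt_E\bC_x\bt_E^T)=\bC_{tx}\bt_E^T$ with the pseudo-inverse, substitute back, and only then apply the change of variables $\bt_E=\bM^T\bQ_x^{-T}$ to expose $P_\M=\bM(\bM^T\bM)^\dagger\bM^T$. This is the more standard textbook route, and the one gap it opens --- whether the normal equation is solvable when $\bt_E\bC_x\bt_E^T$ is singular --- you close correctly by noting that $\bt_E\bC_{xt}$ lies in $\mathcal{R}(\bt_E\bC_x\bt_E^T)=\mathcal{R}(\bt_E\bQ_x^T)=\mathcal{R}(\bt_E)$, so a stationary point exists and convexity makes it a global minimizer. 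In exchange for that extra argument, your derivation makes the optimal regressor explicit in the original parameterization, $\bW^\star=\bC_{tx}\bt_E^T(\bt_E\bC_x\bt_E^T)^\dagger$, whereas the paper's completion-of-squares buys the same conclusion purely algebraically; the final expressions coincide.
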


Applying this result to the target and adversary regressors, we obtain their minimum MSEs,
\begin{equation}
  \begin{aligned}
    \J_y (\bt_E) &=  \min_{\bt_y} \mathcal{L}_y\left( T(E(\x;\bt_E);\bt_y), \y \right) \\
    &= \tr\big [\bC_y \big] - \big\|P_\M \bQ_x^{-T} \bC_{xy}\big\|_F^2
  \end{aligned}
  \label{eq:cod-y}
\end{equation}
\begin{equation}
  \begin{aligned}
    \J_s (\bt_E) &= \min_{\bt_s} \mathcal{L}_s\left( A(E(\x;\bt_E);\bt_s), \s \right) \\
    &= \tr\big [\bC_s \big] - \big\|P_\M \bQ_x^{-T} \bC_{xs}\big\|_F^2
\end{aligned}
\label{eq:cod-s}
\end{equation}

Given the encoder, $\J_y(\bt_E)$ is related to the performance of the target predictor; whereas $\J_s(\bt_E)$ corresponds to the amount of sensitive information that an adversary is able to leak. Note that the linear model for $T$ and $A$ enables us to obtain their respective optimal solutions for a given encoder $\bt_E$. On the other hand, when $T$ and $A$ are modeled as DNNs, doing the same is analytically infeasible and potentially impractical.

The orthogonal projector $P_\M$ in Lemma~\ref{th1} is a function of two factors, a data dependent term $\bQ_x$ and the encoder parameters $\bt_E$. While the former is fixed for a given dataset, the latter is our object of interest. Pursuantly, we decompose $P_\M$ in order to separably characterize the effect of these two factors.  Let the columns of $\bL_x\in\R^{d\times d}$ be an orthonormal basis for the column space of $\bQ_x$. Due to the bijection $\bG_E= \bL_x^{-1} \bQ_x \bt_E^T  \Leftrightarrow \bt_E =\bG_E^T \bL_x^T \bQ_x^{-T}$ from $\bL_x\bG_E=\bQ_x\bt_E^T$, determining the encoder parameters $\bt_E$ is equivalent to determining $\bG_E$. The projector $P_\M$ can now be expressed in terms of $P_\G$, which is only dependent on the free parameter $\bG_E$.
\begin{equation}\label{A-G}
P_\M  =  \bM \big(\bM^T \bM \big)^\dagger \bM^T  = \bL_x P_\G \bL_x^T
\end{equation}
where we used the equality $\bM=\bQ_x \bt_E^T$ and the fact that $\bL_x ^T \bL_x=\bI$.

Now, we turn back to the ARL setup and see how the above decomposition can be leveraged. The optimization problem in (\ref{eq:arl}) reduces to,
\begin{equation}
\begin{aligned}
\min_{\bG_E} & \ \ \J_y (\bG_E)  \\
\mathrm {s.t. \ \  } & \J_s (\bG_E) \ge \alpha
\end{aligned}
\label{eq-constrained}
\end{equation}
\noindent where the minimum MSE measures of~(\ref{eq:cod-y}) and~(\ref{eq:cod-s}) are now expressed in terms of $\bG_E$ instead of $\bt_E$.

Before solving this optimization problem, we will first interpret it geometrically. Consider a simple example where $\x$ is a white random vector i.e., $\bC_x=\bI$. Under this setting, $\bQ_x = \bL_x = \bI$ and $\bG_E = \bt_E^T$. As a result, the optimization problem in (\ref{eq-constrained}) can alternatively be solved in terms of $\bG_E = \bt^T_E$ as $\J_y(\bG_E) = \tr\big [\bC_y \big] -   \big \|P_{ \G} \bC_{xy}\big\|_F^2\nn$ and $\J_s(\bG_E) = \tr\big [\bC_s \big] -  \big \|P_{ \G} \bC_{xs}\big\|_F^2\nn$.

The constraint $\J_s(\bG_E) \geq \alpha$ implies $\big \|P_{ \G} \bC_{xs}\big\|_F^2\le \big(\tr\big [\bC_s \big]-\alpha\big)$ which is geometrically equivalent to the subspace $\G$ being outside (or tangent to) the cone around $\bC_{xs}$. Similarly, minimizing $\J_y(\bG_E)$ implies maximizing $\big \|P_{ \G} \bC_{xy}\big\|_F^2\nn$, which in turn is equivalent to minimizing the angle between the subspace $\G$ and the vector $\bC_{xy}$. Therefore, the global optima of (\ref{eq-constrained}) is any hyper plane $\G$ which is outside the cone around $\bC_{xs}$ while subtending the smallest angle to $\bC_{xy}$. An illustration of this setting and its solution is shown in Figure~\ref{fig-intuition} for $d=3$, $r=2$ and $p=q=1$.
\begin{figure}[t]
\centering
\includegraphics[scale=0.7]{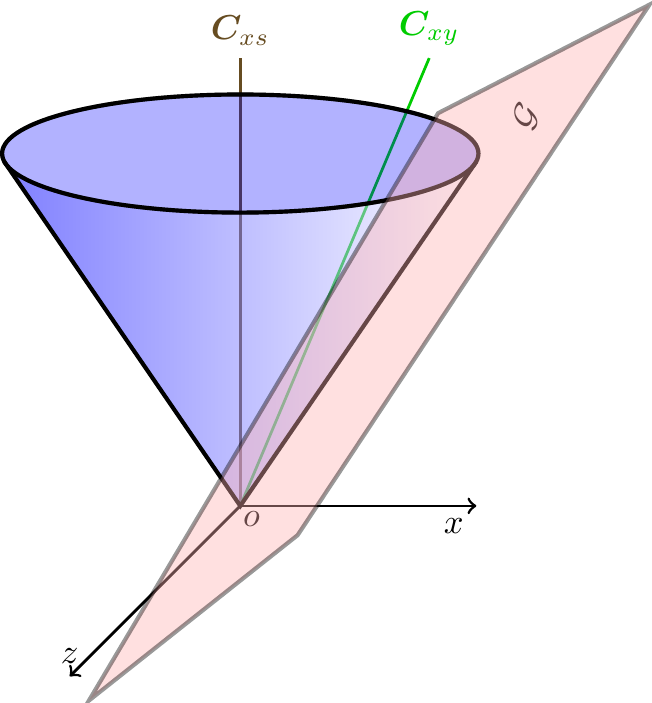}
\caption{\textbf{Geometric Interpretation:} An illustration of a three-dimensional input space $\x$ and one-dimensional target and adversary regressors. Therefore, both $\bC_{xs}$ and $\bC_{xy}$ are one-dimensional. We locate the $y$-axis in the same direction as $\bC_{xs}$. The feasible space for the solution $\bG_E=\bt_E^T$ imposed by the constraint $\J_s(\bt_E) \ge \alpha$ corresponds to the region \emph{outside} the cone (specified by $\bC_s$ and $\alpha$) around $\bC_{xs}$. The non-convexity of the problem stems from the non-convexity of this feasible set. The objective $\min \J_y(\bt_E)$ corresponds to minimizing the angle between the line $\bC_{xy}$ and the plane $\G$. When $\bC_{xy}$ is outside the cone, the line $\bC_{xy}$ itself or any plane that contains the line $\bC_{xy}$ and does not intersect with the cone, is a valid solution. When $\bC_{xy}$ is inside the cone, the solution is either a line or, as we illustrate, a tangent hyperplane to the cone that is closest to $\bC_{xy}$. The non-differentiability stems from the fact that the solution can either be a plane or a line.}
\label{fig-intuition}
\end{figure}

Constrained optimization problems such as (\ref{eq-constrained}) are commonly solved through their respective unconstrained Lagrangian~\cite{bertsekas1999nonlinear} formulations as shown below
\begin{equation}\label{eq-main}
\min_{\bG_E \in \R^{d\times r}} \Big\{ (1-\lambda) \J_y (\bG_E)-(\lambda) J_s (\bG_E) \Big\}
\end{equation}
for some parameter $0 \le \lambda \le 1$. Such an approach affords two main advantages and one disadvantage; (a) A direct and closed-form solution can be obtained. (b) Framing (\ref{eq-main}) in terms of $\lambda$ and $(1-\lambda)$ allows explicit control between the two extremes of \emph{no privacy} ($\lambda=0$) and \emph{no target} ($\lambda=1$). As a consequence, it can be shown that for every $\lambda \in [0,1], \mbox{ } \exists \mbox{ } \alpha \in [\alpha_\min, \alpha_\max]$ (see Section \ref{sec:relation} of appendix for a proof). In practice, given a user specified value of $\alpha_\min\le\alpha_{\mathrm{tol}} \le \alpha_\max$ , we can solve~(\ref{eq-constrained}) by iterating over $\lambda\in[0,1]$ until the solution of~(\ref{eq-main}) yields the same specified $\alpha_{\mathrm{tol}}$. (c) The vice-versa on the other hand does not necessarily hold i.e., for a given tolerable loss $\alpha$ there may not be a corresponding $\lambda \in [0,1]$. This is the theoretical limitation\footnote{Practically, as we show in Figures \ref{fig:bounds-gauss} and \ref{fig:bounds-cifar}, all values of $\alpha\in[\alpha_{min},\alpha_{max}]$ appear to be reachable as we sweep through $\lambda\in[0,1]$.} of solving Lagrangian problem instead of the constrained problem.

Before we obtain the solution to the Lagrangian formulation (\ref{eq-main}), we characterize the nature of the optimization problem in the following theorem.
\begin{theorem}\label{th2}
As a function of $\bG_E \in \R^{d\times r}$, the objective function in~(\ref{eq-main})  is neither convex nor differentiable.
\end{theorem}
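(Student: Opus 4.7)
The plan is to expose a single structural feature of the problem from which both claims follow: the projector $P_\G=\bG_E(\bG_E^T\bG_E)^\dagger\bG_E^T$ depends on $\bG_E$ only through $\mathcal R(\bG_E)$, so it is invariant under any nonzero rescaling of $\bG_E$, yet $P_{\bm 0}=\bm 0$. Substituting $P_\M=\bL_x P_\G\bL_x^T$ into~(\ref{eq:cod-y})--(\ref{eq:cod-s}) and using $\bL_x^T\bL_x=\bI$, the Lagrangian~(\ref{eq-main}) collapses to
\begin{equation*}
f(\bG_E)\;=\;c\;+\;\lambda\,\bigl\|P_\G\,\mathbf{u}_s\bigr\|_F^2\;-\;(1-\lambda)\,\bigl\|P_\G\,\mathbf{u}_y\bigr\|_F^2,
\end{equation*}
with $\mathbf{u}_y:=\bL_x^T\bQ_x^{-T}\bC_{xy}$, $\mathbf{u}_s:=\bL_x^T\bQ_x^{-T}\bC_{xs}$, and $c:=(1-\lambda)\tr[\bC_y]-\lambda\tr[\bC_s]$ a constant independent of $\bG_E$.

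For non-differentiability I would argue that $f$ is discontinuous at $\bG_E=\bm 0$. By scale invariance, for any $\bG_0\neq\bm 0$ and every $t\neq 0$ one has $f(t\bG_0)=f(\bG_0)$, so $\lim_{t\to 0^+}f(t\bG_0)=f(\bG_0)$, whereas $f(\bm 0)=c$. It therefore suffices to exhibit a single $\bG_0$ with $f(\bG_0)\neq c$. Such a witness is easy to construct: take any $\bG_0$ whose column space contains $\mathbf{u}_y\neq\bm 0$, so $\|P_{\bG_0}\mathbf{u}_y\|_F^2>0$; a generic choice of $\mathcal R(\bG_0)$ then prevents the cancellation $\lambda\|P_{\bG_0}\mathbf{u}_s\|_F^2=(1-\lambda)\|P_{\bG_0}\mathbf{u}_y\|_F^2$, so $f(\bG_0)\neq c$. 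Discontinuity at the origin rules out differentiability there.

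For non-convexity I would use the additional symmetry $f(-\bG_E)=f(\bG_E)$ (again from $P_{-\bG_E}=P_{\bG_E}$). Convexity would force, via Jensen applied at $\pm\bG_E$ with midpoint $\bm 0$,
\begin{equation*}
f(\bm 0)\;\le\;\tfrac12 f(\bG_E)+\tfrac12 f(-\bG_E)\;=\;f(\bG_E)\qquad\text{for every }\bG_E,
\end{equation*}
i.e., the origin would be a global minimiser. However, whenever $\lambda<1$ and $\mathbf{u}_y\neq\bm 0$, one can pick $\bG_0$ whose column space is aligned with $\mathbf{u}_y$ and as orthogonal to $\mathbf{u}_s$ as the data allow, making $\|P_{\bG_0}\mathbf{u}_y\|_F^2$ close to $\|\mathbf{u}_y\|_F^2$ and $\|P_{\bG_0}\mathbf{u}_s\|_F^2$ small, so $f(\bG_0)<c=f(\bm 0)$; this contradicts the Jensen bound and hence convexity. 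The main obstacle I expect is the book-keeping needed to formalise the mild non-degeneracy condition on $\mathbf{u}_y,\mathbf{u}_s$ under which the strict inequalities $f(\bG_0)\neq c$ and $f(\bG_0)<c$ genuinely hold; this reduces to a short direct check from the covariance data and avoids any appeal to smoothness of $P_\G$.
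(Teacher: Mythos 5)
Your reduction of the Lagrangian to $f(\bG_E)=c+\lambda\|P_\G\mathbf{u}_s\|_F^2-(1-\lambda)\|P_\G\mathbf{u}_y\|_F^2$ is correct, and the non-differentiability half of your argument is sound and genuinely different from the paper's: the paper simply appeals to the non-differentiability of the pseudo-inverse inside $P_\G$, whereas you exhibit an outright discontinuity at $\bG_E=\bm{0}$ via the scale invariance $P_{t\bG_0}=P_{\bG_0}$ for $t\neq0$ versus $P_{\bm{0}}=\bm{0}$. Since $f(\bG_E)-c=\tr\big[P_\G\,\bB\big]$ with $\bB$ exactly the matrix in~(\ref{eq-B}), your witness $f(\bG_0)\neq c$ exists precisely when $\bB\neq\bm{0}$, which is the same implicit non-degeneracy the paper's own proof requires, so that caveat is not a real objection.

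The gap is in the non-convexity half. Your symmetry/Jensen step only produces a contradiction if the origin fails to be a global minimizer, i.e.\ if some subspace satisfies $\lambda\|P_\G\mathbf{u}_s\|_F^2<(1-\lambda)\|P_\G\mathbf{u}_y\|_F^2$; this is exactly the condition that $\bB$ has a negative eigenvalue (cf.\ Theorem~\ref{th3}). Your claim that such a $\bG_0$ exists whenever $\lambda<1$ and $\mathbf{u}_y\neq\bm{0}$ is false: take $p=q=1$ with colinear cross-covariances, say $\mathbf{u}_s=\mathbf{u}_y=\mathbf{u}\neq\bm{0}$, and any $\lambda>1/2$; then $f(\bG_E)-c=(2\lambda-1)\|P_\G\mathbf{u}\|_F^2\ge0$, so $\bG_E=\bm{0}$ \emph{is} a global minimizer and no contradiction arises. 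The same happens at $\lambda=1$ for arbitrary data, a value of $\lambda$ the theorem also covers. Yet the objective is still non-convex in this regime: with $\bG_1=[\mathbf{u},\bm{0},\dots,\bm{0}]$ and $\bG_2=[\epsilon\bv,\bm{0},\dots,\bm{0}]$, $\bv\perp\mathbf{u}$, the midpoint gives $\|P_\G\mathbf{u}\|_F^2=\|\mathbf{u}\|^4/(\|\mathbf{u}\|^2+\epsilon^2\|\bv\|^2)$, which for small $\epsilon$ exceeds the endpoint average $\tfrac12\|\mathbf{u}\|^2$, violating convexity. So to prove the theorem in full you must treat the $\bB\succeq 0$ cases by a direct two-point or line-restriction computation of this kind --- which is essentially what the paper does with its explicit rank-one pair $\bG_1,\bG_2$ and the closed-form expression for $\tr[P_\G(t)\bB]$ --- rather than relying on the origin not being a global minimum.
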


\subsubsection{Learning}
\label{sec-solution}
Despite the difficulty associated with the objective in~(\ref{eq-main}), we derive a closed-form solution for its global optima. Our key insight lies in partitioning the search space $\R^{d\times r}$ based on the rank of the matrix $\bG_E$. For a given rank $i$, let $\mathcal{S}_i$ be the set containing all matrices $\bG_E$ of rank $i$,
\begin{equation}
\mathcal{S}_i = \big\{ \bG_E\in \R^{d\times r}  \quad \big|  \quad \mathrm{rank}(\bG_E)=i\big\}, \quad i=0,1,\cdots,r \nn
\end{equation}
Obviously, $\bigcup_{i = 0}^{r} \mathcal{S}_i = \R^{d\times r}$. As a result, the optimization problem in~(\ref{eq-main}) can be solved by considering $r$ minimization problems, one for each possible rank of $\bG_E$:
\begin{equation}\label{eq-main2}
\min_{i\in \{1,\dots,r\}}\Big\{\min_{\bG_E \in \mathcal{S}_i} (1-\lambda) \J_y (\bG_E)-(\lambda) J_s (\bG_E)\Big\}
\end{equation}

We observe from~(\ref{eq:cod-y}),~(\ref{eq:cod-s}) and~(\ref{A-G}), that the optimization problem in~(\ref{eq-main}) is dependent only on a subspace $\G$. As such, the solution $\bG_E$ is not unique since many different matrices can span the same subspace. Hence, it is sufficient to solve for any particular $\bG_E$ that spans the optimal subspace $\G$. Without loss of generality we seek an orthonormal basis spanning the optimal subspace $\G$ as our desired solution. We constrain $\bG_E\in\R^{d\times i}$ to be an orthonormal matrix i.e., $\bG_E^T\bG_E=\bI_i$ where $i$ is the dimensionality of $\G$. Ignoring the constant terms in $J_y$ and $J_s$, for each $i =1, \ldots,r$, the minimization problem over $\mathcal{S}_i$ in~(\ref{eq-main2}) reduces to,
\begin{equation}\label{eq-main3}
\min_{\bG_E^T \bG_E = \bI_i} J_\lambda(\bG_E)
\end{equation}
 where
 \begin{equation}
   \begin{aligned}
     J_\lambda(\bG_E) &= \lambda\|\bL_x \bG_E \bG_E^T\bL_x^T\bQ_x^{-T}\bC_{xs}\|_F^2\nn\\
     &-(1-\lambda)\|\bL_x \bG_E \bG_E^T \bL_x^T\bQ_x^{-T}\bC_{xy}\|_F^2
  \end{aligned}
\end{equation}

From basic properties of trace, we have, $J_\lambda(\bG_E) = \tr\big[\bG_E^T \bB \bG_E\big]$ where $\bB\in\R^{d\times d}$ is a symmetric matrix:
\begin{equation}\label{eq-B}
\bB = \bL_x^T  \bQ_x^{-T}\big({\lambda}\,\bC^T_{sx} \bC_{sx} - {(1-\lambda)}\,\bC^T_{yx} \bC_{yx}\big) \bQ_x^{-1} \bL_x
\end{equation}
The optimization problem in~(\ref{eq-main3}) is equivalent to trace minimization on a Stiefel manifold which has closed-form solution(s) (see \cite{kokiopoulou2011trace} and \cite{edelman1998geometry}).

In view of the above discussion the solution to the optimization problem in~(\ref{eq-main}) or equivalently~(\ref{eq-main2}) can be stated in the next theorem.
\begin{theorem}\label{th3}
Assume that the number of negative eigenvalues ($\beta$) of $\bB$ in~(\ref{eq-B}) is $j$. Denote $\gamma=\min\{r, j \}$. Then, the minimum value in~(\ref{eq-main2}) is given as,
\begin{equation}\label{eq-beta}
\beta_1 + \beta_{2}+\cdots +\beta_{\gamma}
\end{equation}
where $\beta_1 \leq \beta_2 \leq \ldots \leq \beta_{\gamma} < 0$ are the $\gamma$ smallest eigenvalues of $\bB$. And the minimum can be attained by $\bG_E =\bV$, where the columns of $\bV$ are  eigenvectors corresponding to all the $\gamma$ negative eigenvalues of $\bB$.
\end{theorem}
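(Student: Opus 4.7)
The plan is to exploit the structure of~(\ref{eq-main3}): for each fixed rank $i$, the inner problem is a trace minimization on the Stiefel manifold $\{\bG_E \in \R^{d \times i} : \bG_E^T \bG_E = \bI_i\}$, while the outer problem is a discrete search over $i \in \{1, \ldots, r\}$. I will solve the Stiefel problem in closed form via the Ky Fan spectral principle, and then optimize over $i$ by inspecting the sign pattern of the eigenvalues of $\bB$.

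For the inner problem, since $\bB$ is real symmetric, order its eigenvalues as $\beta_1 \le \beta_2 \le \cdots \le \beta_d$ with a corresponding orthonormal eigenbasis $\bv_1, \ldots, \bv_d$. The Ky Fan trace minimization theorem, as invoked in~\cite{kokiopoulou2011trace,edelman1998geometry}, gives
\[
\min_{\bG_E^T \bG_E = \bI_i} \tr\bigl[\bG_E^T \bB \bG_E\bigr] \;=\; \sum_{k=1}^{i} \beta_k,
\]
and the minimum is attained by $\bG_E = [\bv_1, \ldots, \bv_i]$. This resolves~(\ref{eq-main3}) for each candidate rank $i$.

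It then remains to minimize $f(i) := \sum_{k=1}^{i} \beta_k$ over $i \in \{1, \ldots, r\}$. The forward increment $f(i) - f(i-1) = \beta_i$ is negative exactly when $\beta_i < 0$, i.e., for $i \le j$, and nonnegative for $i > j$. Hence $f$ strictly decreases on $\{1, \ldots, \min\{r, j\}\}$ and is nondecreasing afterwards, so the outer minimum is attained at $i = \gamma = \min\{r, j\}$, yielding the value $\beta_1 + \beta_2 + \cdots + \beta_\gamma$ claimed in~(\ref{eq-beta}). The corresponding minimizer is $\bG_E = \bV := [\bv_1, \ldots, \bv_\gamma]$, whose columns are precisely eigenvectors associated with all $\gamma$ negative eigenvalues of $\bB$.

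The subtlety I would be careful about is the reduction from the free-parameter problem~(\ref{eq-main}) on $\R^{d \times r}$ to the orthonormal representative in~(\ref{eq-main3}). This relies on observing, as in the derivation leading to~(\ref{A-G}), that $J_\lambda$ depends on $\bG_E$ only through the column space $\G$ via the projector $P_\G$; consequently any orthonormal basis of $\G$ attains the same objective value, so the restriction $\bG_E^T \bG_E = \bI_i$ within each rank stratum $\mathcal{S}_i$ is lossless. With that justification, the Ky Fan step plus the discrete optimization over $i$ completes the argument; the remaining work is essentially bookkeeping (matching constants dropped from $J_y$ and $J_s$, and interpreting the degenerate case $j = 0$ as the empty sum $0$, attained at $\bG_E \in \mathcal{S}_0$).
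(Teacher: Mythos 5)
Your proposal is correct and follows essentially the same route as the paper: the inner Stiefel-manifold problem is solved by the Ky Fan trace-minimization result from~\cite{kokiopoulou2011trace}, and the outer optimization over the rank $i$ is settled by noting that adding a column decreases the objective exactly when the corresponding eigenvalue of $\bB$ is negative, giving the optimum at $i=\min\{r,j\}$. Your explicit increment argument $f(i)-f(i-1)=\beta_i$ and the remark on the lossless restriction to orthonormal $\bG_E$ within each rank stratum simply spell out steps the paper states tersely or handles just before the theorem.
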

Note that, including the eigenvectors corresponding to zero eigenvalues of $\bB$ into our solution $\bG_E$ in Theorem~\ref{th3} does not change the minimum value in~(\ref{eq-beta}). But, considering only negative eigenvectors results in $\bG_E$ with the least rank and thereby an encoder that is less likely to contain sensitive information for an adversary to exploit. Once $\bG_E$ is constructed, we can obtain our desired encoder as, $\bt_E = \bG_E^T \bL_x^T \bQ_x^{-T}$. Recall that the solution in Theorem~\ref{th3} is under the assumption that the covariance $\bC_x$ is a full-rank matrix. In Section \ref{sec:linear} of the appendix, we develop a solution for the more practical and general case where empirical moments are used instead.

\subsection{Non-Linear Extension Through Kernelization\label{kernelization}}
We extend the ``linear" version of the ARL problem studied thus far to a ``non-linear" version through kernelization. We model the encoder in the ARL problem as a linear function over non-linear mapping of inputs as illustrated in Figure~\ref{fig:overview}. Let the data matrix $\bX$ be mapped non-linearly by a possibly unknown and infinite dimensional function $\phi_x (\cdot)$ to $\bp_x$. Let the corresponding reproducing kernel function be $k_x(\cdot,\cdot)$. The centered kernel matrix can be obtained as,
\begin{equation}
\tilde{\bK}_x = \tilde{\bp}_x^T \tilde{\bp}_x = \bD^T \bp_x^T \bp_x \bD = \bD^T \bK_x \bD
\end{equation}
\noindent where $\bK_x$ is the kernel matrix on the original data $\bX$.

If the co-domain of $\phi_x(\cdot)$ is infinite dimensional (e.g., RBF kernel), then the encoder in~(\ref{encoder}) would be also be infinite dimensional i.e., $\bt_E\in\R^{r\times \infty}$, which is infeasible to learn directly. However, the representer theorem~\cite{shawe2004kernel} allows us to construct the encoder as a linear function of $\tilde{\bp}^T $, i.e, $\bt_E = \bl \tilde{\bp}_x^T = \bl \bD^T \bp_x^T$. Hence, a data sample $\bm{x}$ can be mapped through the ``kernel trick" as,
\begin{equation}
  \begin{aligned}
    \bt_E\phi_x(\x) &= \bl \bD^T\bp_x^T \phi_x(x) \\
    &= \bl\bD^T [k_x(\x_1,\x),\,\cdots,\,k_x(\x_n,\x)]^T
\end{aligned}
\end{equation}

Hence, designing $\bt_E$ is equivalent to designing $\bl\in\R^{r\times n}$. The Lagrangian formulation of this Kernel-ARL setup and its solution shares the same form as that of the linear case (\ref{eq-main}). The objective function remains non-convex and non-differentiable, while the matrix $\bB$ is now dependent on the kernel matrix $\bK_x$ as opposed to the covariance matrix $\bC_x$ (see Section \ref{sec:kernel} of the appendix for details).
\begin{equation}
\bB =  \bL_x^T \big(\lambda\,\tilde{\bS}^T\tilde{\bS}-(1-\lambda)\,\tilde{\bY}^T\tilde{\bY}\big) \bL_x
\end{equation}
where the columns of $\bL_x$ are the orthonormal basis for $\tilde{\bK}_x$. Once $\bG_E$ is obtained through the eigendecomposition of $\bB$, we can find $\bl$ as $\bl = \bG_E^T \bL_x^T \tilde{\bK}_x^\dagger$. This non-linear extension in the form of kernelization serves to study the ARL problem under a setting where the encoder possess greater representational capacity while still being able to obtain the global optima and bounds on objectives of the target predictor and the adversary as we show next. Algorithm~\ref{alg1} provides a detailed procedure for solving both the Linear-ARL and Kernel-ARL formulations.

\section{Analytical Bounds}
In this section we introduce bounds on the utility and invariance of the representation learned by SARL. We define four bounds $\alpha_\min$, $\alpha_\max$, $\gamma_\min$ and $\gamma_\max$.

\vspace{5pt}
\noindent$\bm{\gamma_\min:}$ A lower bound on the minimum achievable target loss, or equivalently an upper bound on the best achievable target performance. This bound can be expressed as the minimum target MSE across all possible encoders $\bt_E$ and is attained at $\lambda=0$.
\begin{equation}
    \gamma_\min = \min_{\bt_E}J_y(\bt_E)\nn
\end{equation}

\vspace{2pt}
\noindent$\bm{\alpha_\max:}$ A upper bound on the maximum achievable adversary loss, or equivalently a lower bound on the minimum leakage of sensitive attribute. This bound can be expressed as the maximum adversary MSE across all possible encoders $\bt_E$ and is attained at $\lambda=1$.
\begin{equation}
    \alpha_\max = \max_{\bt_E} J_s(\bt_E)\nn
\end{equation}

\vspace{2pt}
\noindent$\bm{\gamma_\max:}$ An upper bound on the maximum achievable target loss, or equivalently a lower bound on the minimum achievable target performance. This bound corresponds to the scenario where the encoder is constrained to maximally hinder the adversary. In all other cases one can obtain higher target performance by choosing a better encoder. This bound is attained in the limit $\lambda \rightarrow 1$ and can be expressed as,
\begin{equation}
    \gamma_\max = \min_{\arg \max J_s(\bt_E)} J_y(\bt_E)\nn
\end{equation}

\vspace{2pt}
\noindent$\bm{\alpha_\min:}$ A lower bound on the minimum achievable adversary loss, or equivalently an upper bound on the maximum leakage of sensitive attribute. The absolute lower bound is obtained in the scenario where the encoder is neither constrained to aid the target nor hinder the adversary i.e.,
\begin{equation}
    \alpha^*_{min} = \min_{\bt_E} J_s(\bt_E)\nn
\end{equation}
However, this is an unrealistic scenario since in the ARL problem, by definition, the encoder is explicitly designed to aid the target. Therefore, a more realistic lower bound can be defined under the constraint that the encoder maximally aids the target i.e.,
\begin{equation}
    \bar{\alpha}_{\min} = \min_{\arg \min J_y(\bt_E)} J_s(\bt_E)\nn
\end{equation}
However, even this bound is not realistic, since, among all the encoders that aid the target one can always choose the encoder that minimizes the leakage of the sensitive attribute. The bound corresponding to such an encoder can be expressed as,
\begin{equation}
    \alpha_\min = \max_{\arg \min J_y(\bt_E)} J_s(\bt_E)\nn
\end{equation}
This bound is attained in the limit $\lambda \rightarrow 0$. It is easy to see that these bounds are ordinally related as,
\begin{equation}
    \alpha^*_\min \leq \bar{\alpha}_{\min} \leq \alpha_\min \nn
\end{equation}
To summarize, in each of these cases, there exists an encoder that achieves the respective bound. Therefore, given a choice, the encoder that corresponds to $\alpha_\min$ is the most desirable.

The following Lemma defines these bounds and their respective closed form expressions as a function of data.
\begin{lemma}\label{th4}
Let the columns of $\bL_x$ be the orthonormal basis for $\tilde{\bK}_x$ (in linear case $\tilde{\bK}_x = \tilde{\bX}^T \tilde{\bX}$). Further, assume that the columns of $\bV_s$ are the singular vectors corresponding to zero singular values of $\tilde{\bS}\bL_x$
and the  columns of $\bV_y$ are the singular vectors corresponding to non-zero singular values of $\tilde{\bY}\bL_x$.
Then, we have
\begin{align}
    \gamma_\min = &\min_{\bt_E}J_y(\bt_E)\nn\\
    = &\frac{1}{n}\big\|\tilde{\bY}^T\big\|_F^2-\frac{1}{n}{\|\tilde{\bY}\bL_x\|_F^2} \nn\\
    \gamma_\max = &\min_{\arg \max J_s(\bt_E)} J_y(\bt_E)\nn\\
        = &\frac{1}{n}\big\|\tilde{\bY}^T\big\|_F^2 - \frac{1}{n} \big\| \tilde{\bY} \bL_x \bV_s\big\|_F^2\nn\\
    \alpha_\min = &\max_{\arg \min J_y(\bt_E)} J_s(\bt_E)\nn\\
    = &\frac{1}{n}\big\|\tilde{\bS}^T\big\|_F^2 - \frac{1}{n} \big\| \tilde{\bS} \bL_x \bV_y\big\|_F^2\nn\\
    \alpha_\max = &\max_{\bt_E} J_s(\bt_E)\nn\\
    = &\frac{1}{n}\big\|\tilde{\bS}^T\big\|_F^2\nn
\end{align}
\end{lemma}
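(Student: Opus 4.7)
My plan is first to collapse all four extremal problems to a single canonical form, after which each bound falls out of an SVD argument. Combining Lemma~\ref{th1} with the decomposition $P_\M=\bL_x P_\G \bL_x^T$ from~(\ref{A-G}), and using the empirical moments $\bC_{xy}=\tfrac{1}{n}\tilde\bX\tilde\bY^T$, $\bC_{xs}=\tfrac{1}{n}\tilde\bX\tilde\bS^T$ (and their kernel analogues from Section~\ref{kernelization}, where $\bL_x$ is the orthonormal basis for $\tilde\bK_x$), I would first show that on the Stiefel manifold $\bG_E^T\bG_E=\bI$,
\begin{equation*}
J_y(\bG_E)=\tfrac{1}{n}\|\tilde\bY^T\|_F^2-\tfrac{1}{n}\|\tilde\bY\bL_x\bG_E\|_F^2, \qquad J_s(\bG_E)=\tfrac{1}{n}\|\tilde\bS^T\|_F^2-\tfrac{1}{n}\|\tilde\bS\bL_x\bG_E\|_F^2.
\end{equation*}
Concretely, $\|P_\G \bL_x^T\bQ_x^{-T}\bC_{xy}\|_F^2=\tr[\bG_E^T\bL_x^T\tilde\bY^T\tilde\bY\bL_x\bG_E]/n=\|\tilde\bY\bL_x\bG_E\|_F^2/n$, and analogously for $\tilde\bS$; this is exactly the per-summand structure of the $\bB$ matrix in~(\ref{eq-B}). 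All four extremal problems are thereby reduced to maximizing or minimizing $\|\tilde\bY\bL_x\bG_E\|_F^2$ and $\|\tilde\bS\bL_x\bG_E\|_F^2$ over orthonormal $\bG_E$, possibly with an additional subspace constraint.

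For the unconstrained bounds $\gamma_{\min}$ and $\alpha_{\max}$, I would invoke the standard Ky Fan / Stiefel trace argument: letting $\bG_E$ range over an orthonormal basis of the full ambient space gives $\|\tilde\bY\bL_x\bG_E\|_F^2=\|\tilde\bY\bL_x\|_F^2$, yielding $\gamma_{\min}$ directly; conversely, taking $\bG_E=\bV_s$, whose columns span $\N(\tilde\bS\bL_x)$, yields $\|\tilde\bS\bL_x\bG_E\|_F^2=0$, so that $\alpha_{\max}=\tfrac{1}{n}\|\tilde\bS^T\|_F^2$. In both cases the choice of $\bG_E$ is feasible and the reduction to Frobenius norms shows that no other $\bG_E$ can do better.

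For the constrained bounds $\gamma_{\max}$ and $\alpha_{\min}$, the key observation is to characterize the constraint sets via the SVDs of $\tilde\bS\bL_x$ and $\tilde\bY\bL_x$: $\arg\max J_s=\{\bG_E\,:\,\mathrm{col}(\bG_E)\subseteq\mathrm{span}(\bV_s)\}$ and $\arg\min J_y=\{\bG_E\,:\,\mathrm{col}(\bG_E)\supseteq\mathrm{span}(\bV_y)\}$. For $\gamma_{\max}$ I would parameterize $\bG_E=\bV_s\bU$ with $\bU^T\bU=\bI$ and maximize $\|\tilde\bY\bL_x\bV_s\bU\|_F^2$; the maximum is attained by letting $\bU$ span the full column space of the reduced problem, giving $\|\tilde\bY\bL_x\bV_s\|_F^2$. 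For $\alpha_{\min}$ I would partition $\bG_E=[\bV_y\mid\bU]$ with $\bU$ orthonormal to $\bV_y$ and exploit additivity $\|\tilde\bS\bL_x\bG_E\|_F^2=\|\tilde\bS\bL_x\bV_y\|_F^2+\|\tilde\bS\bL_x\bU\|_F^2$; maximizing $J_s$ forces the second term to vanish, which is attainable by taking $\bU$ empty (i.e.\ $\bG_E=\bV_y$) or by augmenting only with directions in $\N(\tilde\bS\bL_x)$. This leaves $\alpha_{\min}=\tfrac{1}{n}\|\tilde\bS^T\|_F^2-\tfrac{1}{n}\|\tilde\bS\bL_x\bV_y\|_F^2$.

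The main obstacle I anticipate is the precise characterization of the optimal-set constraints in the two constrained problems, and in particular verifying that $\arg\min J_y$ is exactly the collection of Stiefel matrices whose column span contains $\mathrm{range}(\bV_y)$, and that the substitutions $\bG_E=\bV_s\bU$ and $\bG_E=[\bV_y\mid\bU]$ preserve orthonormality while exhausting all feasible candidates. Both follow cleanly from the SVDs of $\tilde\bY\bL_x$ and $\tilde\bS\bL_x$, but must be written out to rule out the possibility that some other $\bG_E$ outside these parameterizations also attains optimality of the inner problem; once that is established, the outer spectral maximization is routine.
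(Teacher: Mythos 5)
Your plan is correct and follows essentially the same route as the paper: reduce $J_y$ and $J_s$ to the forms $\tfrac{1}{n}\|\tilde{\bY}^T\|_F^2-\tfrac{1}{n}\|\tilde{\bY}\bL_x\bG_E\|_F^2$ and $\tfrac{1}{n}\|\tilde{\bS}^T\|_F^2-\tfrac{1}{n}\|\tilde{\bS}\bL_x\bG_E\|_F^2$ on the Stiefel manifold, then obtain $\gamma_\min,\alpha_\max$ by unconstrained spectral/projection arguments and $\gamma_\max,\alpha_\min$ by characterizing $\arg\max J_s$ as subspaces inside $\mathrm{span}(\bV_s)$ and $\arg\min J_y$ as subspaces containing $\mathrm{span}(\bV_y)$ and using monotonicity of $\|P_\G\cdot\|_F$ in $\G$. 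Your explicit parameterizations $\bG_E=\bV_s\bU$ and $\bG_E=[\bV_y\mid\bU]$ simply make precise the subspace argument the paper states more tersely (via the trace-optimization results it cites), so no substantive difference or gap remains.
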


Under the special case of one dimensional data, i.e., $\x$, $\y$ and $\s$ are scalars, the above bounds can be related to the normalized correlation of the variables involved. Specifically, the normalized bounds $\gamma_\min$ and $\alpha_\min$ can be expressed as,
\begin{align}
    &&\frac{\gamma_\min}{\sigma_y^2} = 1-\rho^2(\x, \y) \nn\\
    &&\frac{\alpha_\min}{\sigma_s^2} = 1-\rho^2(\x, \s)\nn
\end{align}
where $\rho(\cdot, \cdot)$ denotes the correlation coefficient (i.e., normalized correlation) between two random variables and $\sigma_y^2 = \E[\tilde{y}^2]$ ($\sigma_s^2$ is similarly defined). Similarly, the upper bounds $\gamma_\max$ and $\alpha_\max$ can be expressed in terms of the variance of the label space as,
\begin{align}
    \frac{\gamma_\max}{\sigma_y^2} = \frac{\alpha_\max}{\sigma_s^2}=1 \nn
\end{align}
Therefore, in the one-dimensional setting, the achievable bounds are related to the underlying alignment between the subspace spanned by the data $\bX$ and the respective subspaces spanned by the labels $\bS$ and $\bY$.
\section{Computational Complexity\label{sec:computation}}
In the case of Linear-SARL, calculating the covariance matrices $\bC_x$, $\bC_{yx}$ and $\bC_{s x}$ requires $\mathcal O(d^2 n)$, $\mathcal O(p^2 n)$ and $\mathcal O(q^2 d)$ multiplications, respectively. Next, the complexity of Cholesky factorization $\bC_x = \bQ^T_x \bQ_x$ and calculating its inverse $\bQ_x^{-1}$ is $\mathcal O(d^3)$ each. Finally, solving the optimization problem has a complexity of $\mathcal O(d^3)$ to eigendecompose the $d\times d$ matrix $\bB$. In the case of Kernel-SARL, eigendecomposition of $\bB$ requires $\mathcal O(n^3)$ operations. However, for scalability i.e., large $n$ (e.g., CIFAR-100), the Nystr\"{o}m method with data sampling \cite{kumar2012sampling} can be adopted. To summarize, the complexity of the linear and kernel formulations is $\mathcal O(d^3)$ and $\mathcal O(n^3)$, respectively.

% experiments.tex

\section{Numerical Experiments}

We evaluate the efficacy of the proposed Spectral-ARL (SARL) algorithm in finding the global optima, and compare it with other ARL baselines that are based on the standard simultaneous SGD optimization (henceforth referred to as SSGD). In all experiments we refer to our solution for ``linear" ARL as Linear-SARL and the solution to the ``kernel" version of the encoder with linear classifiers for the predictor and adversary as Kernel-SARL.

\subsection{Mixture of Four Gaussians}
\begin{figure}[ht]
\centering
\tdplotsetmaincoords{0}{0}
\begin{tikzpicture}[scale=.95, ,tdplot_main_coords]
\begin{axis}
[xmin=0,xmax=4, ymin=0,ymax=4, zmin=-1, zmax=1, xticklabels={,,}, yticklabels={,,}, zticklabels={,,}
    ,xmajorgrids=true,
    grid style=dashed
    ,ymajorgrids=true,
    grid style=dashed
    ,zmajorgrids=true,
    grid style=dashed]
        \addplot3[only marks, fill=blue,mark=*] table {figures/tikz3.txt};
        \addplot3[only marks, fill=red,mark=*] table {figures/tikz4.txt};
        \addplot3[only marks,very thick, blue,mark=x] table {figures/tikz1.txt};
        \addplot3[only marks,very thick, red,mark=x] table {figures/tikz2.txt};
\end{axis}
\end{tikzpicture}
\caption{Samples from a mixture of four Gaussians. Each sample has two attributes, shape and color.}
\label{fig:mixdata}
\end{figure}
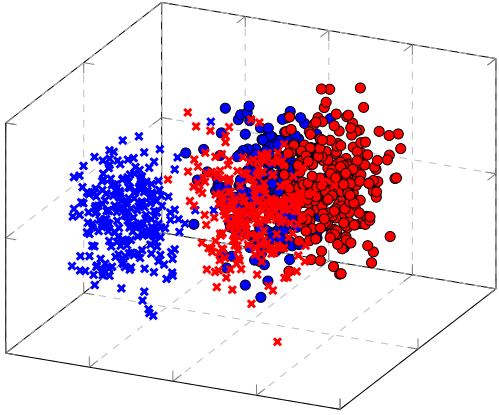
\begin{figure}[ht]
  \centering
  \includegraphics[width=0.45\textwidth]{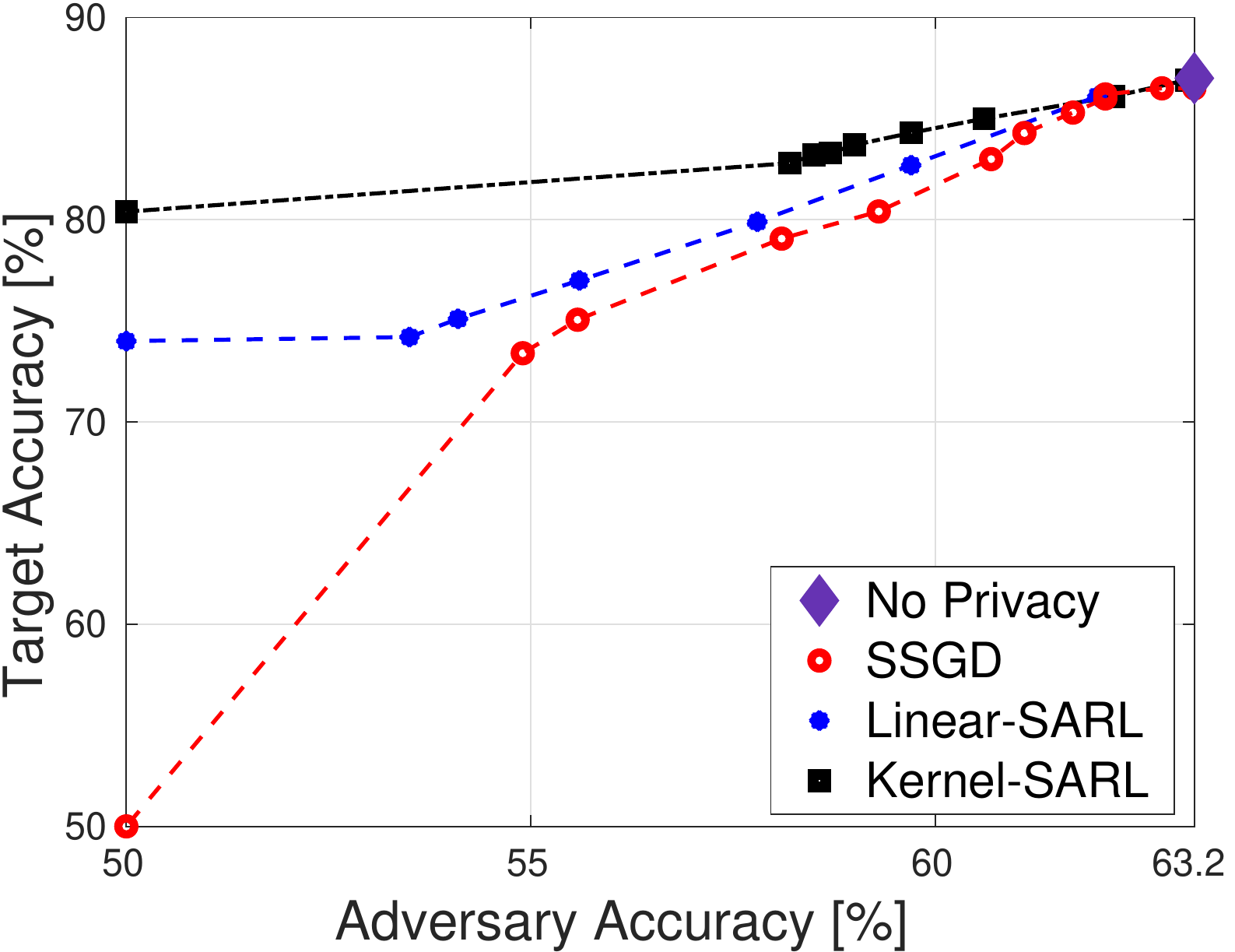}
  \caption{\textbf{Gaussian Mixture:} Trade-off between target performance and leakage of sensitive attribute by adversary.\label{fig-mog}}
\end{figure}
\begin{figure}[ht]
\begin{subfigure}[b]{0.23\textwidth}
    \centering
    \includegraphics[width=\textwidth]{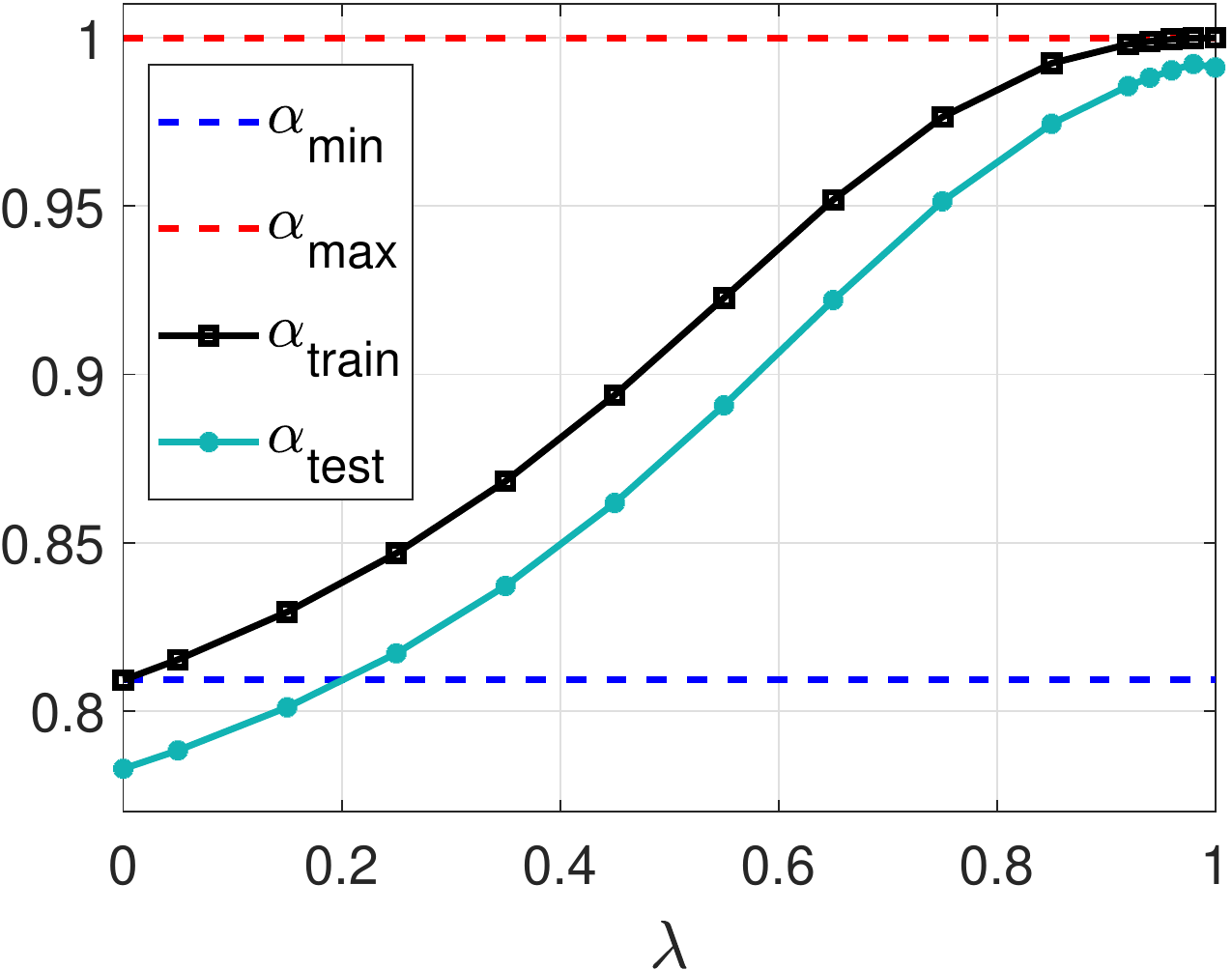}
    \caption{Linear-SARL\label{fig:bounds-gauss-lin}}
\end{subfigure}
\begin{subfigure}[b]{0.23\textwidth}
    \centering
    \includegraphics[width=\textwidth]{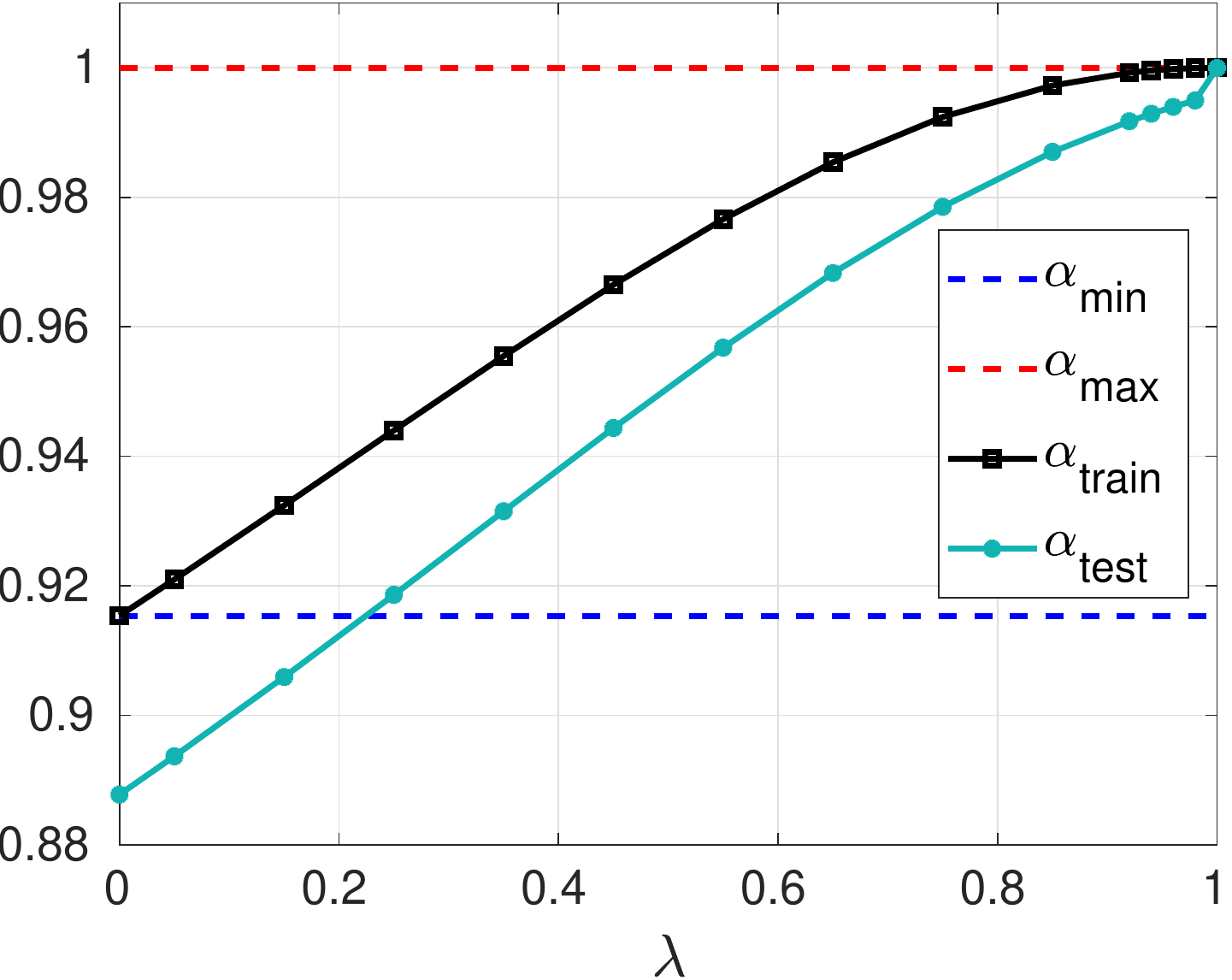}
    \caption{Kernel-SARL\label{fig:bounds-gauss-ker}}
\end{subfigure}
\caption{\textbf{Gaussian Mixture:} Lower and upper bounds on adversary loss, $\alpha_{min}$ and $\alpha_{max}$, computed on training set. The loss achieved by our solution as we vary $\lambda$ is shown on the training and testing sets, $\alpha_{train}$ and $\alpha_{test}$, respectively.\label{fig:bounds-gauss}}
\end{figure}
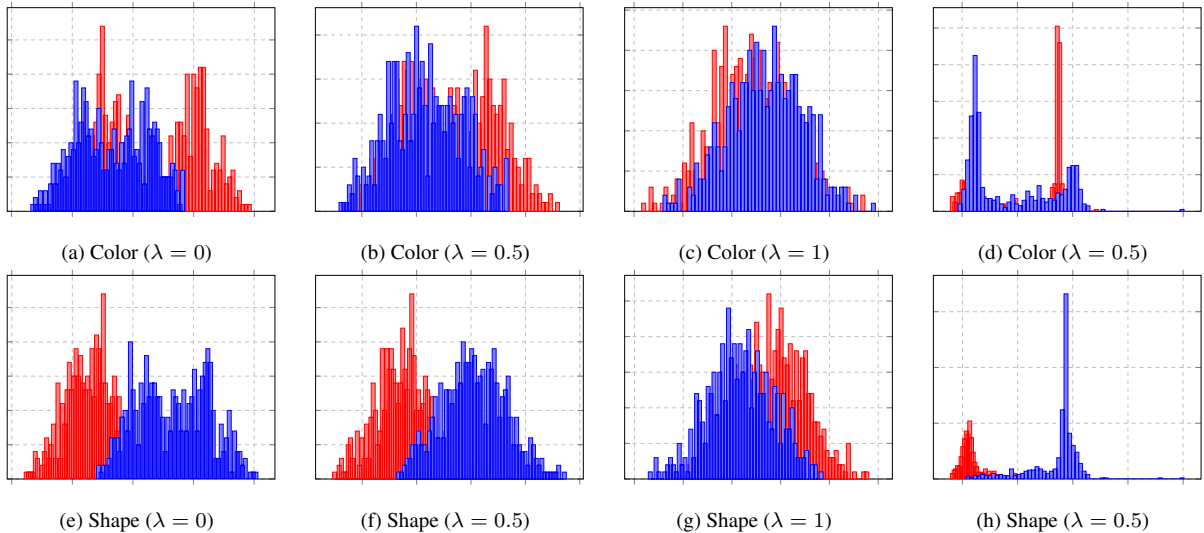
\begin{figure*}[t]
\centering
\begin{subfigure}[b]{0.23\textwidth}
\centering
\begin{tikzpicture}[scale=0.5]
Adversary\begin{axis}[width=8.7cm, height=7cm,
xticklabels={,,}, yticklabels={,,}
    ,xmajorgrids=true,
    grid style=dashed
    ,ymajorgrids=true,
    grid style=dashed,
    ybar,
    ymin=0,
    bar width=3pt
]
\addplot[red, fill=red!50 ] table  {figures/h0_y1.txt};
\addplot[blue, fill=blue!50] table  {figures/h0_s1.txt};
     \end{axis}
\end{tikzpicture}\caption{Color ($\lambda=0$)}
\end{subfigure}
\begin{subfigure}[b]{0.23\textwidth}
\centering
\begin{tikzpicture}[scale=0.5]
Adversary\begin{axis}[width=8.7cm, height=7cm,
xticklabels={,,}, yticklabels={,,}
    ,xmajorgrids=true,
    grid style=dashed
    ,ymajorgrids=true,
    grid style=dashed,
    ,ybar,
    ymin=0,
    bar width=3pt
]
\addplot[red, fill=red!50] table  {figures/h50_y1.txt};
\addplot[blue, fill=blue!50] table  {figures/h50_s1.txt};
     \end{axis}
\end{tikzpicture}\caption{Color ($\lambda=0.5$)}
\end{subfigure}
\begin{subfigure}[b]{0.23\textwidth}
\centering
\begin{tikzpicture}[scale=0.5]
Adversary\begin{axis}[width=8.7cm, height=7cm,
xticklabels={,,}, yticklabels={,,}
    ,xmajorgrids=true,
    grid style=dashed
    ,ymajorgrids=true,
    grid style=dashed,
    ,ybar,
    ymin=0,
    bar width=3pt
]
\addplot[red, fill=red!50] table  {figures/h99_y1.txt};
\addplot[blue, fill=blue!50] table  {figures/h99_s1.txt};
     \end{axis}
\end{tikzpicture}\caption{Color ($\lambda=1$)}
\end{subfigure}
\begin{subfigure}[b]{0.23\textwidth}
\centering
\begin{tikzpicture}[scale=0.5]
\begin{axis}[width=8.7cm, height=7cm,
xticklabels={,,}, yticklabels={,,}
    ,xmajorgrids=true,
    grid style=dashed
    ,ymajorgrids=true,
    grid style=dashed,
    ybar,
    ymin=0,
    bar width=3pt
]
\addplot[red, fill=red!50] table  {figures/h50_kernel_y1.txt};
\addplot[blue, fill=blue!50] table  {figures/h50_kernel_s1.txt};
\end{axis}
\end{tikzpicture}\caption{Color ($\lambda=0.5$)}
\end{subfigure}
\begin{subfigure}[b]{0.23\textwidth}
\centering
\begin{tikzpicture}[scale=0.5]
\begin{axis}[width=8.7cm, height=7cm,
xticklabels={,,}, yticklabels={,,}
    ,xmajorgrids=true,
    grid style=dashed
    ,ymajorgrids=true,
    grid style=dashed,
    ,ybar,
    ymin=0,
    bar width=3pt
]
\addplot[red, fill=red!50] table  {figures/h0_y2.txt};
\addplot[blue, fill=blue!50] table  {figures/h0_s2.txt};
\end{axis}
\end{tikzpicture}\caption{Shape ($\lambda=0$)}
\end{subfigure}
\begin{subfigure}[b]{0.23\textwidth}
\centering
\begin{tikzpicture}[scale=0.5]
\begin{axis}[width=8.7cm, height=7cm,
xticklabels={,,}, yticklabels={,,}
    ,xmajorgrids=true,
    grid style=dashed
    ,ymajorgrids=true,
    grid style=dashed,
    ,ybar,
    ymin=0,
    bar width=3pt
]
\addplot[red, fill=red!50] table  {figures/h50_y2.txt};
\addplot[blue, fill=blue!50] table  {figures/h50_s2.txt};
\end{axis}
\end{tikzpicture}\caption{Shape ($\lambda=0.5$)}
\end{subfigure}
\begin{subfigure}[b]{0.23\textwidth}
\centering
\begin{tikzpicture}[scale=0.5]
\begin{axis}[width=8.7cm, height=7cm,
xticklabels={,,}, yticklabels={,,}
    ,xmajorgrids=true,
    grid style=dashed
    ,ymajorgrids=true,
    grid style=dashed,
    ybar,
    ymin=0,
    bar width=3pt
]
\addplot[red, fill=red!50] table  {figures/h99_y2.txt};
\addplot[blue, fill=blue!50] table  {figures/h99_s2.txt};
\end{axis}
\end{tikzpicture}\caption{Shape ($\lambda=1$)}
\end{subfigure}
\begin{subfigure}[b]{0.23\textwidth}
\centering
\begin{tikzpicture}[scale=0.5]
\begin{axis}[width=8.7cm, height=7cm,
xticklabels={,,}, yticklabels={,,}
    ,xmajorgrids=true,
    grid style=dashed
    ,ymajorgrids=true,
    grid style=dashed,
    ybar,
    ymin=0,
    bar width=3pt
]
\addplot[red, fill=red!50] table  {figures/h50_kernel_y2.txt};
\addplot[blue, fill=blue!50] table  {figures/h50_kernel_s2.txt};
\end{axis}
\end{tikzpicture}\caption{Shape ($\lambda=0.5$)}
\end{subfigure}
\caption{\textbf{Gaussian Mixture:} The optimal dimensionality of embedding $\bm{z}$ is 1. Visualization of the embedding histogram w.r.t each attribute for different relative emphasis, $\lambda$, on the target (shape) and sensitive attributes (color). Top row is color and bottom row is shape. First three columns show results for a linear-encoder. At $\lambda=0$ the weight on the adversary is 0, so color is still separable. As the value of $\lambda$ increases, we observe that the colors are less and less separable. Last column shows results for a kernel-encoder. Visualization of the embedding histogram for $\lambda=0.5$. We observe that the target attribute is quite separable while the sensitive attribute is entangled. \label{fig:gaussian}}
\end{figure*}

We first consider a simple example in order to visualize and compare the learned embeddings from different ARL solutions. We consider a three-dimensional problem where each data sample consists of two attributes, color and shape. Specifically, the input data $\bX$ is generated from a mixture of four different Gaussian distributions corresponding to different possible combinations of the attributes i.e., $\{{\color{blue}{\bigcirc}}, {\color{red}\bigcirc}, {\color{blue}\times}, {\color{red}\times}\}$ with means at $ \mu_1 =(1,1,0)$, $\mu_2 =(2,2,0)$, $\mu_3=(2, 2.5,0)$, $\mu_4=(2.5, 3,0)$ and identical covariance matrices $ \Sigma= \mathrm{diag}\,\big(0.3^2, 0.3^2, 0.3^2\big)$. The shape attribute is our target while color is the sensitive attribute as illustrated in Figure~\ref{fig:mixdata}. The goal of the ARL problem is to learn an encoder that projects the data such that it remains separable with respect to the shape and non-separable with respect to the color attribute.

We sample 4000 points to learn linear and non-linear (Gaussian kernel) encoders across $\lambda \in [0,1]$. To train the encoder, the one-hot encoding of target and sensitive labels are treated as the regression targets. Then, we freeze the encoder and train logistic regressors for the adversary and target task for each $\lambda$. We evaluate their classification performance on a separate set of 1000 samples. The resulting trade-off front between target and adversary performance is shown in Figure~\ref{fig-mog}. We make the following observations, (1) For $\lambda=1$, all methods achieve an accuracy of $50 \%$ for the adversary which indicates complete removal of features corresponding to the sensitive attribute via our encoding, (2) At small values of $\lambda$ the objective of Linear-ARL is close to being convex, hence the similarity in the trade-off fronts of Linear-SARL and SSGD in that region. However, everywhere else due to the iterative nature of SSGD, it is unable to find the global solution and achieve the same trade-off as Linear-SARL. (3) The non-linear encoder in the Kernel-SARL solution significantly outperforms both Linear-SARL and SSGD. The non-linear nature of the encoder enables it to strongly entangle the color attribute (50\% accuracy) while simultaneously achieving a higher target accuracy than the linear encoder. Figure~\ref{fig:gaussian} visualizes the learned embedding space $\bm{z}$ for different trade-offs between the target and adversary objectives.

Figure~\ref{fig:bounds-gauss} shows the mean squared error (MSE) of the adversary as we vary the relative trade-off $\lambda$ between the target and adversary objectives. The plot illustrates, (1) the lower and upper bounds $\alpha_{\min}$ and $\alpha_{\max}$ respectively calculated on the training dataset, (2) achievable adversary MSE computed on the training set $\alpha_{\mathrm{train}}$, and finally (3) achievable adversary MSE computed on the test set $\alpha_{\mathrm{test}}$. Observe that on the training dataset, all values of $\alpha \in [\alpha_{\min},\alpha_{\max}]$ are reachable as we sweep through $\lambda\in[0,1]$. This is however not the case on the test set as the bounds are computed through empirical moments as opposed to the true covariance matrices.

\subsection{Fair Classification}
We consider the task of learning representations that are invariant to a sensitive attribute on two datasets, Adult and German, from the UCI ML-repository~\cite{Dua:2019}. For comparison, apart from the raw features $\bX$, we consider several baselines that use DNNs and trained through simultaneous SGD; LFR~\cite{zemel2013learning}, VAE~\cite{kingma2013auto}, VFAE~\cite{louizos2015variational}, ML-ARL~\cite{xie2017controllable} and MaxEnt-ARL~\cite{roy2019mitigating}.

The Adult dataset contains $14$ attributes. There are $30,163$ and $15,060$ instances in the training and test sets, respectively. The target task is binary classification of annual income i.e., more or less than $50$K and the sensitive attribute is gender. Similarly, the German dataset contains $1000$ instances of individuals with $20$ different attributes. The target is to classify the credit of individuals as good or bad with the sensitive attribute being age.

\begin{table}[t]
    \centering
    \vspace{-2mm}
    \caption{Fair Classification Performance (in \%)
   \label{tab:uci}}
    \scalebox{0.7}{
    \begin{tabular}{l|ccc|ccc}
        \toprule
        & \multicolumn{3}{c}{Adult Dataset} & \multicolumn{3}{|c}{German Dataset}\\
        \midrule
        {Method} & Target & Sensitive &$\Delta^*$
        &Target & Sensitive & $\Delta^*$ \\
        & (income) & (gender) & & (credit) & (age) & \\
        \midrule
         Raw Data & 85.0 & 85.0 &17.6 & 80.0 & 87.0&6.0\\
         \midrule
         LFR~\cite{zemel2013learning} & 82.3 & 67.0& 0.4 & 72.3  & 80.5&0.5\\
         VAE~\cite{kingma2013auto} & 81.9 & 66.0 & 1.4& 72.5 & 79.5&1.5\\
         VFAE~\cite{louizos2015variational} & 81.3 & 67.0 & 0.4& 72.7 & 79.7&1.3\\
         ML-ARL~\cite{xie2017controllable} & 84.4 & 67.7&0.3 & 74.4& 80.2&0.8\\
         MaxEnt-ARL~\cite{roy2019mitigating} & 84.6 & 65.5 & 1.9 & 72.5 & 80.0 & 1.0\\
         \midrule
         Linear-SARL & {84.1} & {67.4} & {0.0}& {76.3} & {80.9} & {0.1}\\
         Kernel-SARL & {84.1} & {67.4} &{0.0} & {76.3} & {80.9} & {0.1}\\
        \bottomrule
            \multicolumn{6}{c}{\small{$^*$ Absolute difference between adversary accuracy and random guess}}\\
    \end{tabular}}
\end{table}
\begin{figure}[t]
    \centering
    \begin{tikzpicture}[scale=1.1]
    \begin{axis}[
    xticklabels={,,},ytick={0, 0.1,0.2,0.3,0.4,0.5,0.6,0.7,0.8,0.9},
        ybar,
        ymin=0,
        ymax=0.9,
        bar width=11pt,
        ymajorgrids=true,
        grid style=dashed
    ]
    \addplot [black,fill=cyan] table {figures/adult_weights.txt};
    \end{axis}
    \node[rotate=90] at (0.6, .45){ age};
    \node[rotate=90] at (1, .9){work class};
    \node[rotate=90] at (1.45, 1.2){final weight};
    \node[rotate=90] at (1.85, 0.8){education};
    \node[rotate=90] at (2.3, 1.1){education num};
    \node[rotate=90] at (2.75, 1.1){marital status};
    \node[rotate=90] at (3.2, 1){occupation};
    \node[rotate=90] at (3.65, 1){relationship};
    \node[rotate=90] at (4.1, 0.35){race};
    \node[rotate=90, red] at (4.53, 0.7){gender};
    \node[rotate=90] at (4.97, 0.95){capital gain};
    \node[rotate=90] at (5.4, 0.95){capital loss};
    \node[rotate=90] at (5.85, 0.85){hours/week};
    \node[rotate=90] at (6.3, 0.7){country};
    \end{tikzpicture}
    \caption{\textbf{Adult Dataset:} Magnitude of learned encoder weights $\bm{\Theta}_E$ for each semantic input feature.\label{fig:encoder}}
\end{figure}
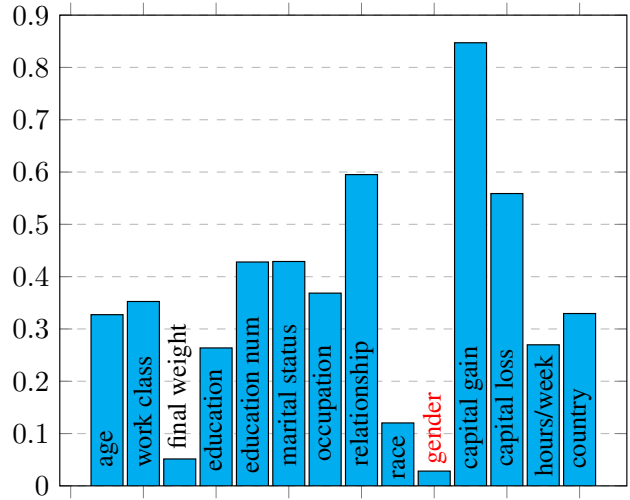

We learn encoders on the training set, after which, following the baselines, we freeze the encoder and train the target (logistic regression) and adversary (2 layer network with 64 units) classifiers on the training set.  Table~\ref{tab:uci} shows the performance of target and adversary on both datasets. Both Linear-SARL and Kernel-SARL outperform all DNN based baselines. For either of these tasks, the Kernel-SARL does not afford any additional benefit over Linear-SARL. For the adult dataset, the linear encoder maps the 14 input features to just one dimension. The weights assigned to each feature is shown in Figure~\ref{fig:encoder}. Notice that the encoder assigns almost zero weight to the gender feature in order to be fair with respect to the gender attribute.

\subsection{Illumination Invariant Face Classification}
\begin{table}[t]
    \centering
    \caption{Extended Yale B Performance (in \%) \label{table:yaleb}}
    \scalebox{0.7}{
    \begin{tabular}{l|cc|cc}
        \toprule
        {Method} & Adversary & Target & Adversary & Target \\
        & (illumination) & (identity)  & (identity) & (illumination) \\
        \midrule
         Raw Data & 96 & 78 & - &-\\
         \midrule
         VFAE~\cite{louizos2015variational} & 57 & 85 & - & - \\
         ML-ARL~\cite{xie2017controllable} & 57 & 89 & - & -\\
         MaxEnt-ARL~\cite{roy2019mitigating} & 40 & 89 & - & -\\
         \midrule
         Linear-SARL  & 21 & 81 & 3& 94\\
         Linear-SARL [EX] & 20 & 86 & 3& 97 \\
         Kernel-SARL & 20 & 86 & 3 & 96\\
         Kernel-SARL [EX]& 20 & 88 & 3 & 96\\
        \bottomrule
    \end{tabular}}
\end{table}
This task pertains to face classification under different illumination conditions on the Extended Yale B dataset~\cite{georghiades2001few}. It comprises of face images of 38 people under five different light source directions, namely, upper right, lower right, lower left, upper left, and front. The target task is to establish the identity of the person in the image with the direction of the light being the sensitive attribute. Since the direction of lighting is independent of identity, the ideal ARL solution should obtain a representation $\bm{z}$ that is devoid of any sensitive information. We first followed the experimental setup of Xie \etal~\cite{xie2017controllable} in terms of the train/test split strategy i.e., 190 samples (5 from each class) for training and 1096 images for testing. Our global solution was able to completely remove illumination from the embedding resulting in the adversary accuracy being 20\% i.e., random chance. To investigate further, we consider different variations of this problem, flipping target and sensitive attributes and exchanging training and test sets. The complete set of results, including DNN based baselines are reported in Table~\ref{table:yaleb} ([EX] corresponds to exchanging training and testing sets). In all these cases, our solution was able to completely remove the sensitive features resulting in adversary performance that is no better than random chance. Simultaneously, the embedding is also competitive with the baselines on the target task.

\subsection{CIFAR-100}
The CIFAR-100 dataset \cite{krizhevsky2009learning} consists of 50,000 images from 100 classes that are further grouped into 20 superclasses. Each image is therefore associated with two attributes, a ``fine'' class label and a ``coarse" superclass label. We consider a setup where the ``coarse" and ``fine" labels are the target and sensitive attributes, respectively. For Linear-SARL and Kernel-SARL (degree five polynomial kernel) and SSGD we use features (64-dimensional) extracted from a pre-trained ResNet-110 model as an input to the encoder, instead of raw images. From these features, the encoder is tasked with aiding the target predictor and hindering the adversary. This setup serves as an example to illustrate how invariance can be ``imparted" to an existing biased pre-trained representation. We also consider two DNN baselines, ML-ARL~\cite{xie2017controllable} and MaxEnt-ARL~\cite{roy2019mitigating}. Unlike our scenario, where the pre-trained layers of ResNet-18 are not adapted, the baselines optimize the entire encoder for the ARL task. For evaluation, once the encoder is learned and frozen, we train a discriminator and adversary as 2-layer networks with 64 neurons each. Therefore, although our approach uses linear regressor as adversary at training, we evaluate against stronger adversaries at test time. In contrast, the baselines train and evaluate against adversaries with equal capacity.

Figure~\ref{fig-cifar} shows the trade-off in accuracy between the target predictor and adversary. We observe that, (1) Kernel-ARL significantly outperforms Linear-SARL. Since the former implicitly maps the data into an higher dimensional space, the sensitive features are potentially disentangled sufficiently for the linear encoder in that space to discard such information. Therefore, even for large values of $\lambda$, Kernel-SARL is able to simultaneously achieve high target accuracy while keeping the adversary performance low. (2) Despite being handicapped by the fact that Kernel-SARL is evaluated against stronger adversaries than it is trained against, its performance is comparable to that of the DNN baselines. In fact, it outperforms both ML-ARL and MaxEnt-ARL with respect to the target task. (3) Despite repeated attempts with different hyper-parameters and choice of optimizers, SSGD was highly unstable across most datasets and often got stuck in a local optima and failed to find good solutions.
\begin{figure}[t]
  \centering
  \includegraphics[width=0.45\textwidth]{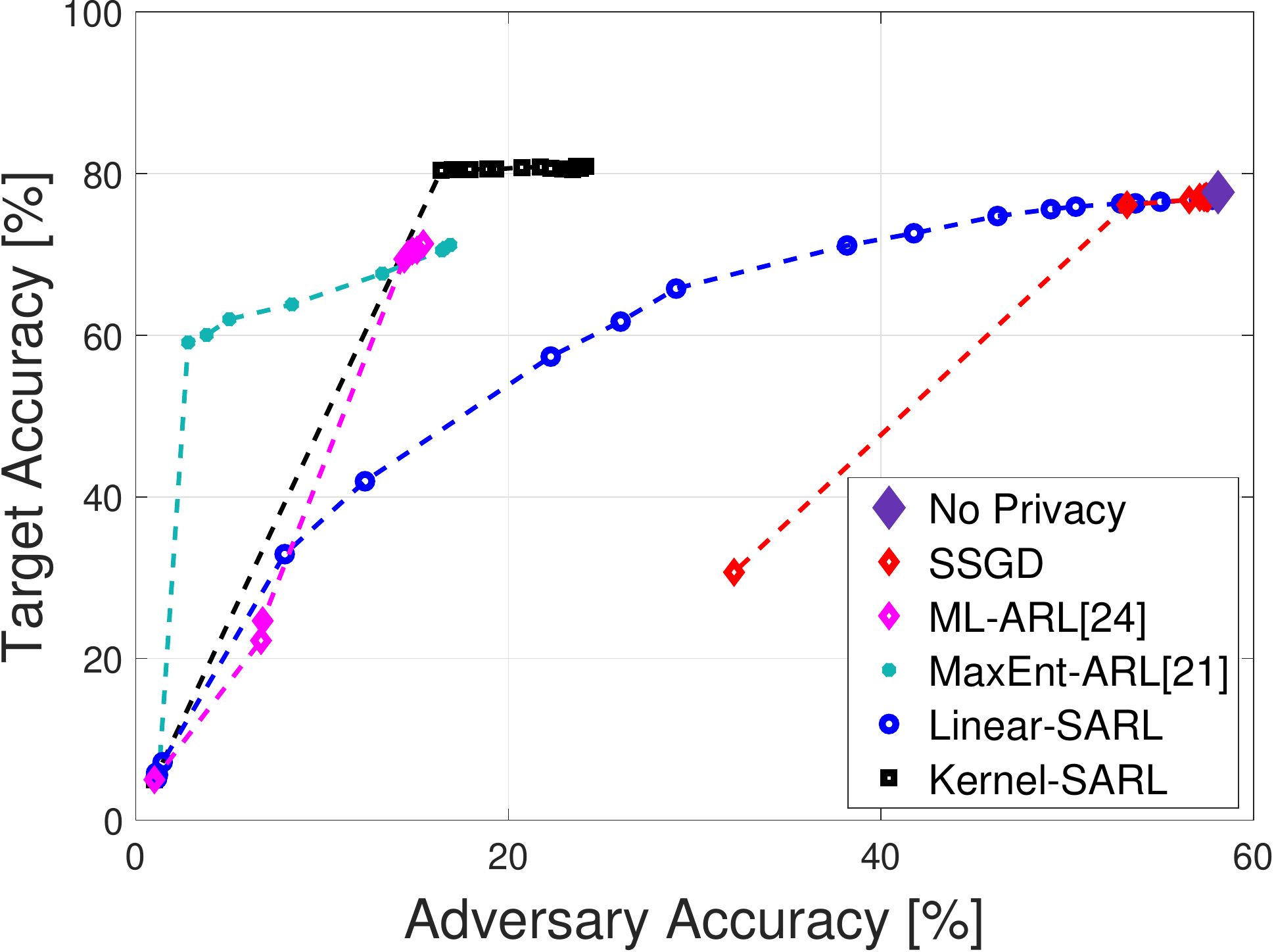}
  \caption{\textbf{CIFAR-100:} Trade-off between target performance and leakage of sensitive attribute by adversary.\label{fig-cifar}}
\end{figure}

Figure~\ref{fig:bounds-cifar} shows the mean squared error (MSE) of the adversary as we vary the relative trade-off $\lambda$ between the target and adversary objectives. The plot illustrates, (1) the lower and upper bounds $\alpha_{\min}$ and $\alpha_{\max}$ respectively calculated on the training dataset, (2) achievable adversary MSE computed on the training set $\alpha_{\mathrm{train}}$, and finally (3) achievable adversary MSE computed on the test set $\alpha_{\mathrm{test}}$. Observe that on the training dataset, all values of $\alpha \in [\alpha_{\min},\alpha_{\max}]$ are reachable as we sweep through $\lambda\in[0,1]$. This is however not the case on the test set as the bounds are computed through empirical moments as opposed to the true covariance matrices.

\begin{figure}[t]
    \centering
\begin{subfigure}[b]{0.23\textwidth}
    \centering
    \includegraphics[width=\textwidth, height=31mm]{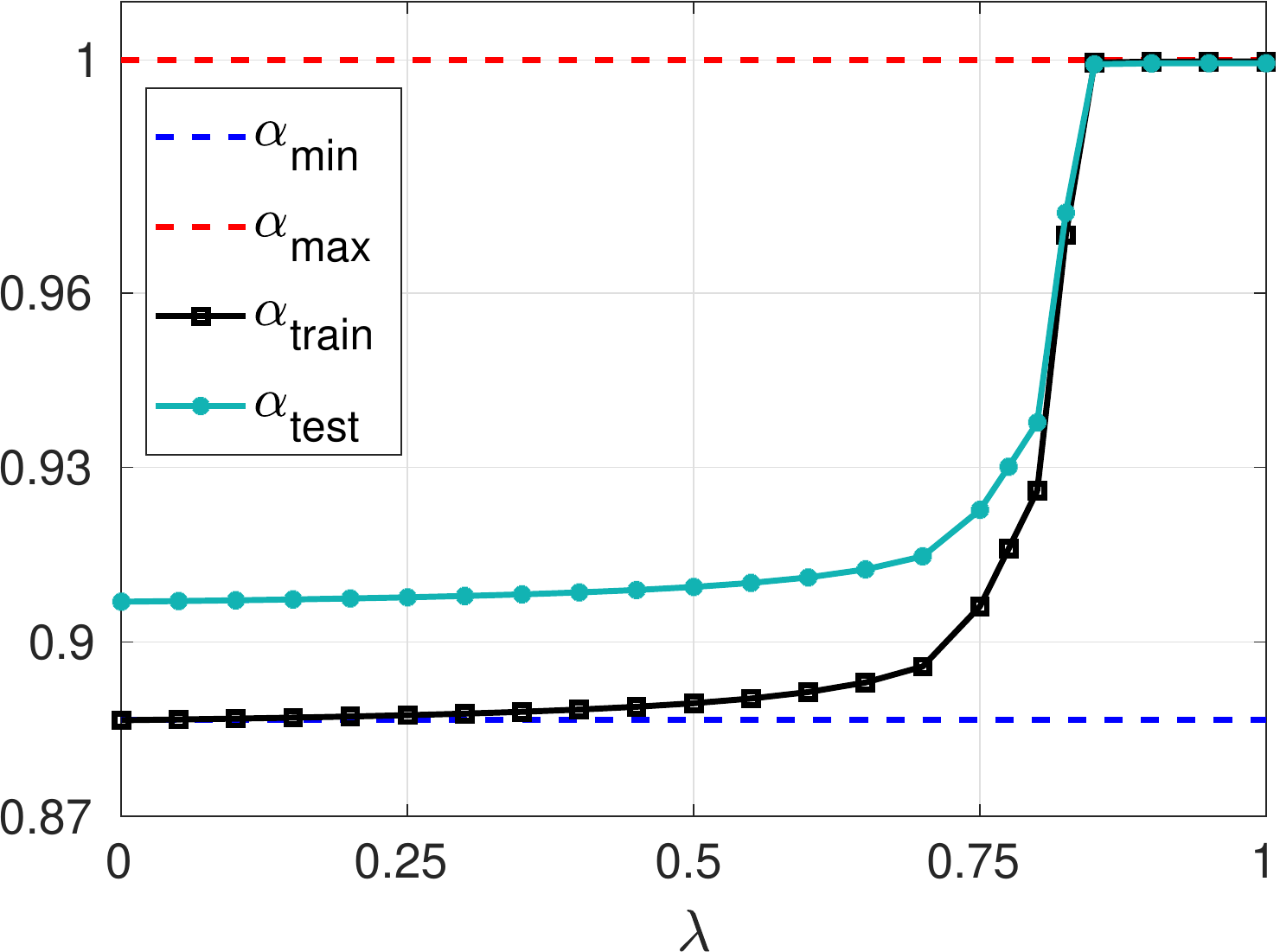}
    \caption{Linear-SARL \label{fig:bounds-cifar-linear}}
\end{subfigure}
\begin{subfigure}[b]{0.23\textwidth}
    \centering
    \includegraphics[width=\textwidth, height=31mm]{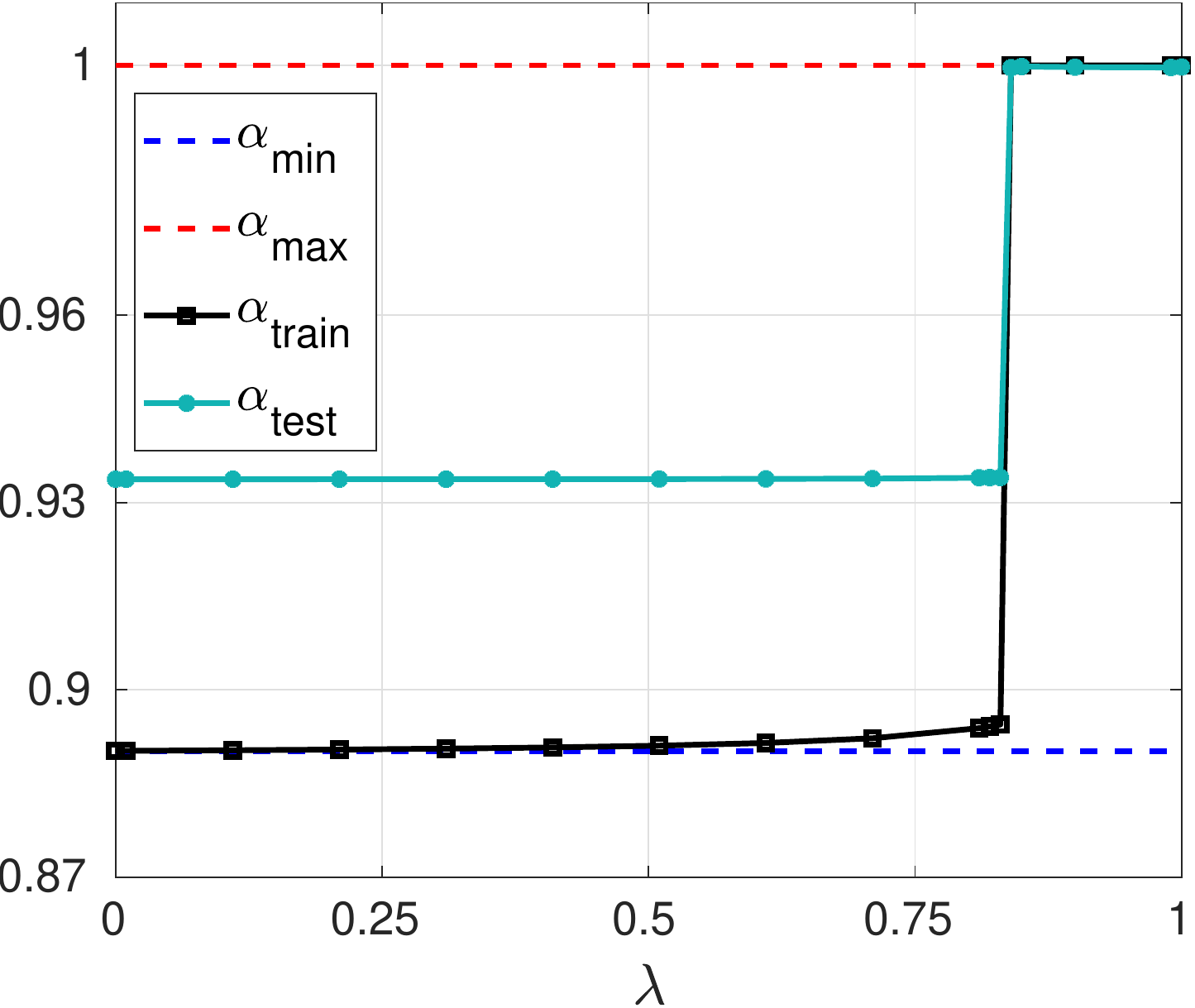}
    \caption{Kernel-SARL\label{fig:bounds-cifar-kernel}}
\end{subfigure}
\caption{\textbf{CIFAR-100:} Lower and upper bounds on adversary loss, $\alpha_{min}$ and $\alpha_{max}$, computed on training set. The loss achieved by our solution as we vary $\lambda$ is shown on the training and testing sets, $\alpha_{train}$ and $\alpha_{test}$, respectively.\label{fig:bounds-cifar}}
\end{figure}
\begin{figure}[t]
    \centering
    \includegraphics[width=0.45\textwidth]{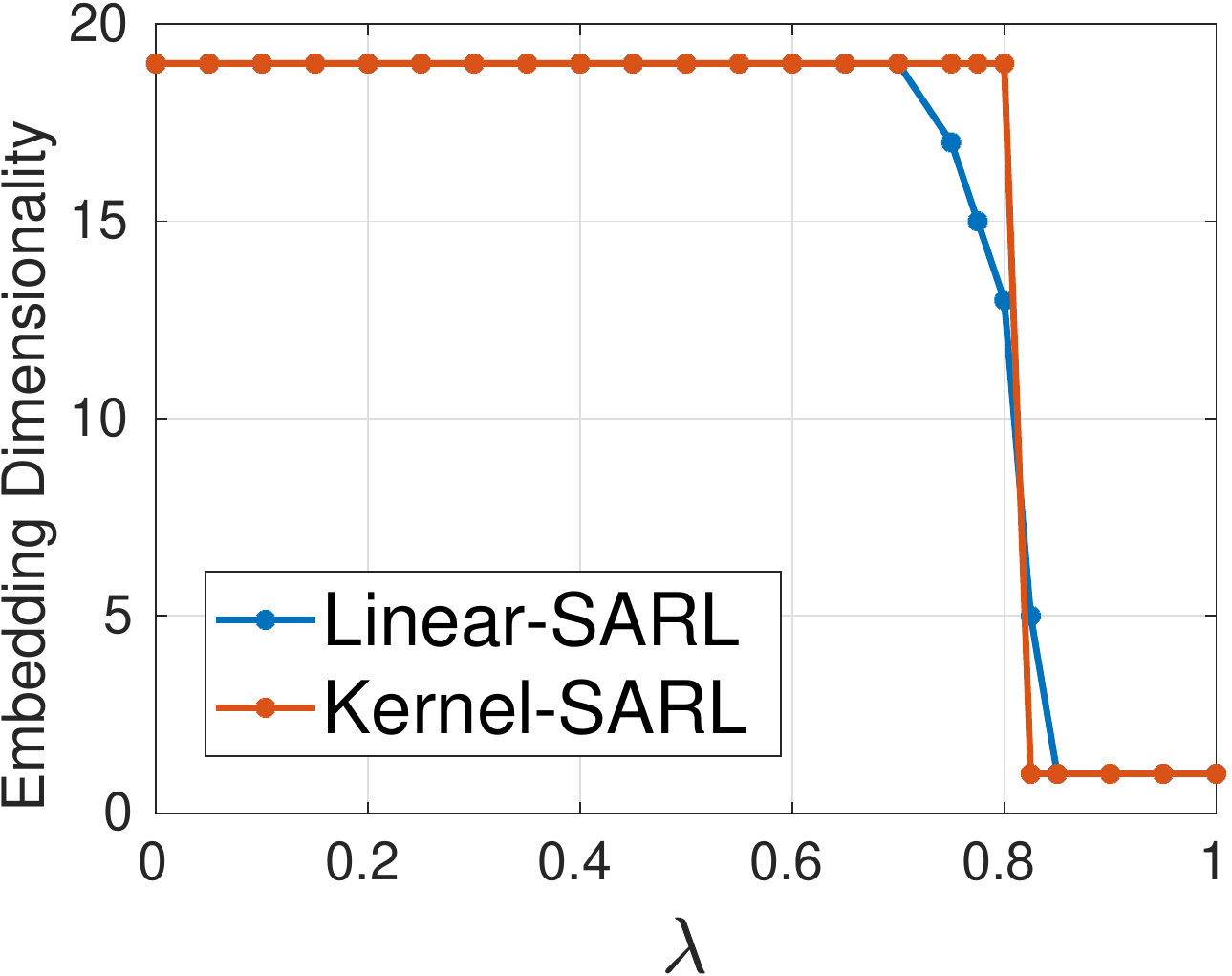}
    \caption{\textbf{CIFAR-100:} Optimal embedding dimensionality learned by SARL. At small values of $\lambda$, the objective favors the target task which predicts 20 classes. Thus, embedding dimensionality of 19 is optimal for a linear target regressor. At large values of $\lambda$, the objective only seeks to hinder the adversary. Thus, SARL determines the optimal dimensionality of the embedding as one. \label{fig:cifar-r}}
\label{fig:cifar-r}
\end{figure}

Figure~\ref{fig:cifar-r} plots the optimal embedding dimensionality provided by SARL as a function of the trade-off parameter $\lambda$. At small values of $\lambda$, the objective favors the target task i.e., 20 class prediction. Thus, SARL does indeed determine the optimal dimensionality of 19 for a 20 class linear target regressor. However, at large values of $\lambda$, the objective only seeks to hinder the sensitive task i.e., 100 class prediction. In this case, the ideal embedding dimensionality from the perspective of the linear adversary regressor is at least 99. The SARL ascertained dimensionality of one is, thus, optimal for maximally mitigating the leakage of sensitive attribute from the embedding. However, unsurprisingly, the target task also suffers significantly.

% conclusion.tex

\section{Concluding Remarks}

We studied the ``linear" form of adversarial representation learning (ARL), where all the entities are linear functions. We showed that the optimization problem even for this simplified version is both non-convex and non-differentiable. Using tools from spectral learning we obtained a closed form expression for the global optima and derived analytical bounds on the achievable utility and invariance. We also extended these results to non-linear parameterizations through kernelization. Numerical experiments on multiple datasets indicated that the global optima solution of the ``kernel" form of ARL is able to obtain a trade-off between utility and invariance that is comparable to that of local optima solutions of deep neural network based ARL. At the same time, unlike DNN based solutions, the proposed method can, (1) analytically determine the achievable utility and invariance bounds, and (2) provide explicit control over the trade-off between utility and invariance.

Admittedly, the results presented in this paper do not extend directly to deep neural network based formulations of ARL. However, we believe it sheds light on nature of the ARL optimization problem and aids our understanding of the ARL problem. It helps delineate the role of the optimization algorithm and the choice of embedding function, highlighting the trade-off between the expressivity of the functions and our ability to obtain the global optima of the adversarial game. We consider our contribution as the first step towards controlling the non-convexity that naturally appears in game-theoretic representation learning.

\vspace{5pt}
\noindent\textbf{Acknowledgements:} This work was performed under the following financial assistance award 60NANB18D210 from U.S. Department of Commerce, National Institute of Standards and Technology.

{\small
 \bibliographystyle{ieee_fullname}
\bibliography{mybib}
}

\onecolumn
\begin{appendices}
	% appendix.tex

Here we include; (a) Section \ref{sec:lemma1}: Proof of Lemma \ref{th1}, (b) Section \ref{sec:relation}: Proof of relation between constrained optimization problem in~(\ref{eq-constrained}) and its Lagrangian formulation in~(\ref{eq-main}), (c) Section \ref{sec:theorem2}: Proof of Theorem \ref{th2}, (d) Section \ref{sec:theorem3}: Proof of Theorem \ref{th3}, (e) Section \ref{sec:linear}: Empirical moments based solution to linear encoder, (f) Section \ref{sec:kernel}: A detailed description of the Kernel-ARL extension, including derivation of its solution, and (g) Section \ref{sec:lemma4}: Proof of Lemma \ref{th4}.

\section{Proofs}

We recall that for any square matrix $\bM$, its trace, denoted by  $\tr[\bM]$, is defined as the sum of all its diagonal elements. The Frobenius norm of $\bM$ can be obtained as $\|\bM\|_F^2 = \tr (\bM \bM^T)$. This allows us to express the MSE of a centered random vector in terms of its covariance matrix:
\begin{equation}   \label{eq-x1}
\E \Big\{ \big\|\y -\bb_y \big\|^2 \Big\} =\tr  \Big[ \E\big\{(\y -\bb_y ) (\y -\bb_y )^T  \big\} \Big]  = \tr[\bC_y]. \nn
\end{equation}
Let $\mathbf{A}$ and $\mathbf{B}$ be two arbitrary matrices with the same dimension. Further, assume that the subspace $\mathcal R (\mathbf{A})$
is orthogonal to $\mathcal R (\mathbf{B})$. Then, using orthogonal decomposition (i.e., Pythagoras theorem), we have
\[
\big\| \mathbf{A} + \mathbf{B} \big\|_F^2 = \big\| \mathbf{A} \big\|_F^2 + \big\|\mathbf{B} \big\|_F^2.
\]

\noindent We provide the statements of the lemmas and theorems for sake of convenience, along with their proofs.

\subsection{Proof of Lemma~1 \label{sec:lemma1}}
\begin{lemma}
Let $\x$ and $\bm{t}$ be two random vectors with $\E[\x]=0$,  $\E[\bm{t}]=\bb$, and $\bC_x\succ0$. Consider a linear regressor, $\bm{\hat{t}} = \bW \z + \bb$, where $\bW \in \R^{m \times r}$ is the parameter matrix, and $\z\in\R^r$ is an encoded version of $\x$ for a given $\bt_E$: $\x  \mapsto \z =\bt_E \x, \quad \bt_E \in \R^{r\times d}$. The  minimum MSE that can be achieved by designing $\bW$ is given as
\[\min_{\bW} \E[\|\bm{t} - \bm{\hat{t}}\|^2 ]  = \tr \big[\bC_t \big] - \big\|P_\M \bQ_x^{-T} \bC_{xt} \big\|_F^2
\]
where $\bM = \bQ_x \bt_E^T \in \R^{d\times r}$, and $ \bQ_x \in \R^{d\times d}$ is a Cholesky factor of $\bC_x$ as shown in (\ref{Cholesky}).
\end{lemma}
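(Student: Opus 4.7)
The plan is to follow the standard MMSE / linear least squares derivation, but to do the algebra in a change of variables that makes the orthogonal projector $P_\M$ appear naturally. Since $\E[\x]=0$ and $\E[\bm t]=\bb$, the intercept in the regressor cancels the mean of $\bm t$, and the MSE becomes $\E[\|(\bm t-\bb)-\bW\bt_E\x\|^2]$. Expanding and using $\E[(\bm t-\bb)(\bm t-\bb)^T]=\bC_t$, $\E[\x(\bm t-\bb)^T]=\bC_{xt}$, $\E[\x\x^T]=\bC_x$, I would first write
\[
\E[\|\bm t-\hat{\bm t}\|^2]=\tr[\bC_t]-2\tr[\bW\bt_E\bC_{xt}]+\tr[\bW\bt_E\bC_x\bt_E^T\bW^T].
\]

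Next I would substitute the Cholesky factorization $\bC_x=\bQ_x^T\bQ_x$ and introduce the change of variables $\bA:=\bW\bM^T=\bW\bt_E\bQ_x^T$. Then $\bW\bt_E\bC_x\bt_E^T\bW^T=\bA\bA^T$ and, writing $\bU:=\bQ_x^{-T}\bC_{xt}$, the cross term becomes $\bW\bt_E\bC_{xt}=\bA\bU$. So the MSE minus $\tr[\bC_t]$ is
\[
\|\bA\|_F^2-2\tr[\bA\bU]=\|\bA-\bU^T\|_F^2-\|\bU\|_F^2 .
\]
The feasibility constraint on $\bA$ is that each row of $\bA$ must lie in the column space $\M=\mathcal R(\bM)$ of $\bM=\bQ_x\bt_E^T$, since $\bA=\bW\bM^T$ expresses each row of $\bA$ as a linear combination of rows of $\bM^T$ (equivalently, columns of $\bM$). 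This characterization is what lets the pseudo-inverse appear cleanly when $\bt_E$ is rank-deficient.

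Then the problem reduces to an orthogonal projection: minimizing $\|\bA-\bU^T\|_F^2$ over matrices whose rows lie in $\M$ is solved row-by-row by projecting each row of $\bU^T$ onto $\M$, i.e.\ $\bA^\star=\bU^T P_\M$ (using symmetry of $P_\M$). The resulting minimum of $\|\bA-\bU^T\|_F^2-\|\bU\|_F^2$ is $-\|P_\M\bU\|_F^2$, using $P_\M^T P_\M=P_\M$ and $\tr[\bU^T P_\M\bU]=\|P_\M\bU\|_F^2$. Adding back $\tr[\bC_t]$ yields the claimed formula. Finally, to recover a closed form for the optimal $\bW$ itself, I would invert the substitution via $\bW^\star=\bA^\star(\bM^T)^\dagger=\bC_{tx}\bt_E^T(\bM^T\bM)^\dagger$, which verifies consistency with the normal equations $\bW\bM^T\bM=\bC_{tx}\bt_E^T$.

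The only real subtlety is the rank-deficient case, where $\bt_E$ (equivalently $\bM$) need not have full column rank, so $\bM^T\bM$ is singular. Casting the optimization as a projection onto $\M$ sidesteps this cleanly: the projector $P_\M=\bM(\bM^T\bM)^\dagger\bM^T$ is well-defined and unique regardless of rank, and the row-constraint formulation makes it obvious that the optimal value depends on $\bt_E$ only through the subspace $\M$, a structural observation the paper exploits in the subsequent spectral reduction.
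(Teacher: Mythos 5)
Your proof is correct and follows essentially the same route as the paper's: expand the MSE, whiten with the Cholesky factor $\bQ_x$, complete the square, and minimize the remaining least-squares term by orthogonal projection onto $\M=\mathcal{R}(\bQ_x\bt_E^T)$, finishing with the Pythagorean identity. The only cosmetic difference is that you characterize the feasible set $\{\bW\bM^T\}$ as matrices with rows in $\M$ and project directly, whereas the paper substitutes the explicit minimizer $\bW^T=\bM^\dagger\bQ_x^{-T}\bC_{xt}$ and uses $\bM\bM^\dagger=P_\M$; both handle the rank-deficient case correctly.
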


\begin{proof}
Direct calculation yields:
\begin{equation}
  \label{eq-lem-1}
  \begin{aligned}
    J_t &= \E \Big\{ \big\|\bm{t} -\hat{\bm{t}} \big\|^2 \Big\} \nn \\
    &= \tr \Big[ \E\Big\{ (\bm{t} -\bb - \bW \z ) (\bm{t} -\bb -\bW \z )^T   \Big\} \Big] \nn \\
    &= \tr \Big[ \E\Big\{ (\bm{t} -\bb) (\bm{t} -\bb)^T +(\bW \bt_E \x ) (\bW \bt_E \x )^T     - (\bm{t} -\bb)(\bW \bt_E \x )^T - (\bW \bt_E \x ) (\bm{t} -\bb )^T \Big\} \Big] \\
    &= \tr \Big[\bC_t + (\bW \bt_E) \bC_x  (\bW \bt_E)^T   - \bC_{tx} (\bW \bt_E )^T - (\bW \bt_E) \bC_{tx}^T    \Big] \nn \\
    &= \tr \Big[\bC_t + (\bW \bt_E  \bQ_x^T) (\bW \bt_E \bQ_x^T)^T   - \bC_{tx} (\bW \bt_E )^T - (\bW \bt_E) \bC_{tx}^T   \Big] \nn \\
    &= \tr \Big[ (\bW \bt_E  \bQ_x^T - \bC_{tx}\bQ_x^{-1}) ( \bW \bt_E  \bQ_x^T - \bC_{tx}\bQ_x^{-1} )^T   +\bC_t-  (\bC_{tx}\bQ_x^{-1}) (\bC_{tx}\bQ_x^{-1})^T  \Big] \nn \\
    &= \big\| \bQ_x \bt_E^T \bW^T  - \bQ_x^{-T}  \bC_{xy}\big\|_F^2 - \big\|\bQ_x^{-T}  \bC_{xt} \big\|_F^2 + \tr [\bC_t]
\end{aligned}
\end{equation}
Hence, the minimizer of $J_t$ is obtained by minimizing the first term in the last equation, which is a standard least square error problem. Let $\bM=\bQ_x \bt_E^T$, then the minimizer is given by
\[\bW^T  = \bM^\dagger \bQ_x^{-T} \bC_{xt}\]

Using the orthogonal decomposition
\[
\big \| \bQ_x^{-T}\bC_{xt}  \big\|_F^2 = \big\|P_{\M} \bQ_x^{-T}\bC_{xt} \big\|_F^2+\big\|P_{\M^\perp}\bQ_x^{-T}\bC_{xt} \big\|_F^2
\]
and
\begin{equation}
  \begin{aligned}
    \big\| \bQ_x \bt_E^T \bW^T  - \bQ_x^{-T}  \bC_{xt}\big\|_F^2 &= \big\| \bM \bW^T  - P_\M\bQ_x^{-T} \bC_{xt}\big\|_F^2+\big\|P_{\M^\perp}\bQ_x^{-T}\bC_{xt} \big\|_F^2 \\
    &= \big\| \underbrace{\bM \bM^\dagger}_{P_\M} \bQ_x^{-T} \bC_{xt}  - P_\M\bQ_x^{-T} \bC_{xt}\big\|_F^2+\big\|P_{\M^\perp}\bQ_x^{-T}\bC_{xt} \big\|_F^2\\
    &= \big\|P_{\M^\perp}\bQ_x^{-T}\bC_{xt} \big\|_F^2 \nn
\end{aligned}
\end{equation}
Therefore, we obtain the minimum value as,
\[
\tr\big [\bC_t \big] - \big\|P_{\M}\bQ_x^{-T} \bC_{xt}\big\|_F^2
\]
\end{proof}

\subsection{Relation Between Constrained Optimization Problem in~(\ref{eq-constrained}) and its Lagrangian Formulation in~(\ref{eq-main}) \label{sec:relation}}
\noindent Consider the optimization problem in~(\ref{eq-constrained})
\begin{equation}\label{pro1}
\bG_\alpha =\arg \min_\bG J_y(\bG), \quad \mathrm{s.t.}  \quad  J_s(\bG) \geq \alpha .
\end{equation}
and the optimization problem in~(\ref{eq-main})
\begin{equation}\label{pro2}
\bG_\lambda = \arg \min_\bG J_\lambda (\bG)
\end{equation}
where
\[ J_\lambda (\bG) = (1 -\lambda) J_y (\bG) - \lambda J_s (\bG), \quad  \lambda \in [0,1]
\]

\noindent\textbf{Claim } For each $\lambda \in [0,1)$, solution $\bG_\lambda$ of (\ref{pro2}) is also a solution of (\ref{pro1}) with
\begin{equation}\label{eq-alf1}
\alpha=J_s(\bG_\lambda).
\end{equation}

\begin{proof}
Let us consider~(\ref{pro1}) while assuming that~(\ref{pro2}) is satisfied. For each $\lambda$ and corresponding solution $\bG_\lambda$,  let $\alpha$ be given as in (\ref{eq-alf1}). For an arbitrary $\bG$ satisfying $J_s(\bG)\ge\alpha$, we have
\begin{equation}
  \begin{aligned}
    (1-\lambda)J_y(\bG_\lambda)-\lambda \alpha &=  (1-\lambda)J_y(\bG_\lambda)-\lambda J_s(\bG_\lambda)\\
    &\le (1-\lambda)J_y(\bG)-\lambda J_s(\bG),
\end{aligned}
\end{equation}
where the second step is from the assumption that~(\ref{pro2}) is satisfied. Consequently, we have,
\begin{equation}
  (1-\lambda)\big[J_y(\bG)-J_y(\bG_\lambda)\big] \ge \lambda\big[J_s(\bG)-\alpha\big]\ge 0
\end{equation}
Since $J_s(\bG)\ge\alpha$, this implies that $J_y(\bG)\ge J_y(\bG_\lambda)$ and consequently $\bG_\lambda$ is a possible minimizer of problem~(\ref{pro1}).
\end{proof}

\subsection{Proof of Theorem~2\label{sec:theorem2}}
\begin{theorem}
As a function of $\bG_E \in \R^{d\times r}$, the objective function in equation~(\ref{eq-main}) is neither convex nor differentiable.
\end{theorem}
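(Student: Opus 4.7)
The plan is to exploit the structural fact that $J_\lambda$ depends on $\bG_E$ only through the orthogonal projector $P_\G = \bG_E(\bG_E^T\bG_E)^\dagger \bG_E^T$ onto the column space $\G$ of $\bG_E$ (cf. equations (\ref{A-G})--(\ref{eq-main})). Both claims — non-differentiability and non-convexity — follow from this column-space dependence of $J_\lambda$, and each can be established by exhibiting an explicit family of test points in $\R^{d\times r}$ rather than by a global analysis.

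For non-differentiability I plan to exhibit a one-parameter family along which $J_\lambda$ has a jump discontinuity at the origin, which immediately precludes differentiability there. Fix any rank-one matrix $\bG_0\in\R^{d\times r}$ and set $\bG_E(t)=t\,\bG_0$. A short calculation with the pseudo-inverse shows that $P_\G(t)$ equals the projector onto $\mathcal{R}(\bG_0)$ for every $t\neq 0$, while $P_\G(0)=0$; the scale-invariance of $P_\G$ is the reason the map is radially constant but fails to be continuous at the origin. Substituting into the formula for $J_\lambda$ then gives a value that is constant for $t\neq 0$ and a strictly different value at $t=0$, provided the quantity $\lambda\|\bL_x P_{\mathcal{R}(\bG_0)}\bL_x^T\bQ_x^{-T}\bC_{xs}\|_F^2 - (1-\lambda)\|\bL_x P_{\mathcal{R}(\bG_0)}\bL_x^T\bQ_x^{-T}\bC_{xy}\|_F^2$ is nonzero for the chosen $\bG_0$, which is generic.

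For non-convexity I plan to use the sign invariance $J_\lambda(\bG_E)=J_\lambda(-\bG_E)$, which is again a special case of the column-space dependence since $\mathcal{R}(\bG_E)=\mathcal{R}(-\bG_E)$. Choose any $\bG_1\neq 0$ with $J_\lambda(\bG_1)<J_\lambda(\bm{0})$; such a $\bG_1$ exists whenever $J_\lambda$ is non-constant on $\R^{d\times r}$, which is the regime of interest (otherwise the ARL problem is vacuous). Setting $\bG_2=-\bG_1$, the midpoint is $\tfrac12(\bG_1+\bG_2)=\bm{0}$, so convexity would demand $J_\lambda(\bm{0})\le \tfrac12 J_\lambda(\bG_1)+\tfrac12 J_\lambda(\bG_2)=J_\lambda(\bG_1)$, contradicting the choice of $\bG_1$.

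The main obstacle will be handling degenerate regimes in which one of the two counterexamples collapses — for instance $\lambda\in\{0,1\}$ combined with a vanishing cross-covariance, in which case $J_\lambda$ genuinely is constant along the chosen direction. To avoid an unwieldy case analysis, I intend to present the argument under a mild non-degeneracy assumption on $\bC_{xy}$ and $\bC_{xs}$, or, equivalently, to pin down an explicit low-dimensional instance (e.g.\ $d=2$, $r=1$, scalar $\y$ and $\s$, numerical covariances) in which both the discontinuity at $\bm{0}$ and the strict inequality $J_\lambda(\bm{0})>J_\lambda(\bG_1)$ are verified by direct substitution. Since the theorem asserts non-convexity and non-differentiability as properties of the map $\bG_E\mapsto J_\lambda(\bG_E)$, a single such instance suffices to refute both.
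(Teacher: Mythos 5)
Your non-differentiability argument is sound and in fact more explicit than the paper's own one-line justification (the paper simply points to the pseudo-inverse and cites a reference): writing the objective as a constant plus $\tr[P_\G\bB]$ and noting that along $\bG_E(t)=t\,\bG_0$ the projector is constant for $t\neq0$ but drops to $\bm{0}$ at $t=0$ exhibits a genuine jump, hence non-differentiability, whenever $\bG_0$ can be chosen with $\tr[P_{\mathcal{R}(\bG_0)}\bB]\neq0$, i.e.\ whenever $\bB\neq\bm{0}$ (take $\bG_0$ spanned by an eigenvector of a nonzero eigenvalue). This is a different route from the paper, which for non-convexity restricts to a line $t\,\bG_1+\bG_2$ along which no rank drop occurs and shows the resulting rational function of $t$ fails convexity unless the entries of $\bB$ satisfy special identities.

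The gap is in your non-convexity step. The claim ``such a $\bG_1$ with $J_\lambda(\bG_1)<J_\lambda(\bm{0})$ exists whenever $J_\lambda$ is non-constant'' is false: since $J_\lambda(\bG_E)=\mathrm{const}+\tr[P_\G\bB]$, a strictly smaller value than $J_\lambda(\bm{0})$ exists if and only if $\bB$ has a negative eigenvalue. When $\bB\succeq 0$ and $\bB\neq\bm{0}$ --- which happens at $\lambda=1$ for any data, and more generally whenever $\lambda\,\bC_{sx}^T\bC_{sx}\succeq(1-\lambda)\,\bC_{yx}^T\bC_{yx}$ (e.g.\ $\s$ strongly aligned with $\y$ and $\lambda$ large) --- the function is non-constant yet $\bG_E=\bm{0}$ is its global minimizer, so your symmetric-midpoint construction has no starting point, even though the objective is still non-convex in that regime. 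This is not the measure-zero degeneracy you flagged, and falling back to a single hand-built numerical instance proves only that \emph{some} instance of the objective is non-convex, which is weaker than the theorem's claim about the objective built from the problem's own $\bB$. The repair is already contained in your first argument: a finite-valued convex function on $\R^{d\times r}$ must be continuous, so the jump along $t\mapsto t\,\bG_0$ already rules out convexity; equivalently, with $h(t)=J_\lambda(t\,\bG_0)$, convexity would force both $h(1)\le\tfrac12 h(0)+\tfrac12 h(2)$ and $h(0)\le\tfrac12 h(-1)+\tfrac12 h(1)$, which together give $\tr[P_{\mathcal{R}(\bG_0)}\bB]=0$, contradicting the choice of $\bG_0$. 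With that substitution your proof covers every $\lambda$ and every non-trivial $\bB$ (only the fully degenerate case $\bB=\bm{0}$, where the objective really is constant, is excluded --- an implicit assumption the paper's proof makes as well).
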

\begin{proof}
Recall that $P_\G$ is equal to $\bG_E (\bG_E^T \bG_E)^\dagger \bG_E^T$. Therefore, due to the involvement of the pseudo inverse,~(\ref{eq-main}) is not differentiable (see~\cite{golub1973differentiation}).

For non-convexity consider the theorem that $f(\bG_E)$ is convex in $\bG_E\in \R^{d\times r}$ if and only if $h(t)=f(t\,\bG_1+\bG_2)$ is convex  in $t\in \R$ for any constants $\bG_1,\, \bG_2\in\R^{d\times r}$ (see~\cite{boyd2004convex}).

In order to use the above theorem, consider rank one matrices
\[
\bG_1 = \begin{bmatrix}
    1       & 0  & \dots & 0 \\
    0       &  0 & \dots & 0 \\
        0       &  0 & \dots & 0 \\
    \vdots  & \vdots & \ddots &\\
    0       & 0& \dots & 0
\end{bmatrix}
\quad \mathrm{and} \quad
\bG_2 = \begin{bmatrix}
    1       & 0  & \dots & 0 \\
    1      &  0 & \dots & 0 \\
        0      &  0 & \dots & 0 \\
    \vdots  & \vdots & \ddots &\\
    0       & 0& \dots & 0
\end{bmatrix}.
\]
Define $\bG_E = (t\,\bG_1+\bG_2)$. Then
\[
P_\G (t) = \bG_E (\bG_E^T \bG_E)^\dagger \bG_E^T
= \frac{1}{(t+1)^2+1}
\begin{bmatrix}
    (t+1)^2       & (t+1)  & 0 & \dots & 0 \\
    (t+1)      &  1 & 0 & \dots & 0 \\
    0      &  0 & 0 & \dots & 0 \\
    \vdots  & \vdots & \vdots &  \ddots &\\
    0       & 0 & 0 & \dots & 0
\end{bmatrix}.
\]
Using basic properties of trace we get,
\begin{equation}
(1-\lambda)J_y(\bG_E)-\lambda J_s(\bG_E) =  \tr \big[ P_\G (t)\bB\big],\nn
\end{equation}
where the matrix $\bB$ is given in~(\ref{eq-B}) and we used Lemma \ref{th1}. Now, represent $\bB$ as
\[
\bB = \begin{bmatrix}
    b_{11}       & b_{12}  & \dots & b_{1d} \\
    b_{12}      &  b_{22}  & \dots & b_{2d}  \\
    \vdots  & \vdots & \ddots &\\
    b_{1d}        & b_{2d} & \dots & b_{dd}
\end{bmatrix}.
\]
Thus,
\begin{equation}
  \tr\big[ P_\G (t)\bB\big] = b_{11}+ \frac{2b_{12}(t+1)+b_{22}-b_{11}}{(t+1)^2+1} \nn
\end{equation}
It can be shown that the above function of $t$ is convex only if $b_{12}=0$ and $b_{11}=b_{22}$. On the other hand, if these two conditions hold, it can be similarly shown that  $(1-\lambda)J_y(\bG_E)-\lambda J_s(\bG_E)$ is non-convex by considering a different pair of matrices $\bG_1$ and $\bG_2$. This implies that $(1-\lambda)J_y(\bG_E)-\lambda J_s(\bG_E)$ is not convex.
\end{proof}

\subsection{Proof of Theorem~3\label{sec:theorem3}}
\begin{theorem}
Assume that the number of negative eigenvalues ($\beta$) of $\bB$ in~(\ref{eq-B}) is $j$. Denote $\gamma=\min\{r, j \}$. Then, the minimum value in~(\ref{eq-main2}) is given as,
\begin{equation}
\beta_1 + \beta_{2}+\cdots +\beta_{\gamma} \nonumber
\end{equation}
where $\beta_1 \leq \beta_2 \leq \ldots \leq \beta_{\gamma} < 0$ are the $\gamma$ smallest eigenvalues of $\bB$. And the minimum can be attained by $\bG_E =\bV$, where the columns of $\bV$ are  eigenvectors corresponding to all the $\gamma$ negative eigenvalues of $\bB$.
\end{theorem}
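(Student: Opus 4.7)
The plan is to handle the two nested minimizations in~(\ref{eq-main2}) in sequence: first, for each fixed rank $i$, solve the Stiefel-constrained trace problem~(\ref{eq-main3}) in closed form; second, optimize the resulting value over $i\in\{1,\dots,r\}$ by examining when including an additional direction helps or hurts.

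\textbf{Step 1 (fixed rank).} Since $\bB$ is symmetric, I would diagonalize $\bB=\bU\bl\bU^T$ with $\bU$ orthogonal and $\bl=\mathrm{diag}(\beta_1,\dots,\beta_d)$ arranged in nondecreasing order $\beta_1\le\cdots\le\beta_d$. The substitution $\bH=\bU^T\bG_E$ preserves the Stiefel constraint, $\bH^T\bH=\bG_E^T\bU\bU^T\bG_E=\bI_i$, and converts the objective into $\tr[\bG_E^T\bB\bG_E]=\tr[\bH^T\bl\bH]=\sum_{k=1}^{d}\beta_k\,\|\bH_{k,:}\|^2$, where $\bH_{k,:}$ is the $k$-th row of $\bH$. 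Orthonormality of its columns forces $\sum_k\|\bH_{k,:}\|^2=i$ together with $\|\bH_{k,:}\|^2\le 1$ for each $k$. By the Ky Fan / Poincar\'e minimum principle for trace minimization on a Stiefel manifold (\cite{kokiopoulou2011trace,edelman1998geometry}, as invoked in the paper), the minimum of this weighted sum is $\beta_1+\cdots+\beta_i$, attained when $\bH$ is the matrix of the first $i$ standard basis vectors. Undoing the substitution, the corresponding minimizer is the $\bG_E$ whose columns are the eigenvectors of $\bB$ associated with $\beta_1,\dots,\beta_i$.

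\textbf{Step 2 (optimal rank).} Let $f(i)=\sum_{k=1}^{i}\beta_k$. Because $f(i+1)-f(i)=\beta_{i+1}$ and the eigenvalues are sorted, $f$ is strictly decreasing while $\beta_{i+1}<0$ and nondecreasing once $\beta_{i+1}\ge 0$. Hence, over $i\in\{1,\dots,r\}$, $f$ is minimized at $i=\gamma=\min\{r,j\}$, yielding the stated value $\beta_1+\cdots+\beta_\gamma$. The minimizer is the matrix $\bV$ whose columns are the eigenvectors of $\bB$ corresponding to its $\gamma$ smallest (negative) eigenvalues, as claimed.

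\textbf{Main obstacle.} The technical workhorse is the spectral/Ky~Fan characterization of trace minimization on the Stiefel manifold; once that is invoked as a known fact, everything else is essentially a one-line discrete optimization in $i$. A little care is needed for the degenerate case $j=0$: then no negative eigenvalue exists, the objective is nonnegative on every $\mathcal{S}_i$ with $i\ge 1$, and the overall minimum is $0$, attained at the trivial $\bG_E=0$, which is consistent with the convention $\gamma=0$ and an empty eigenvector matrix in the statement. A secondary subtlety to flag is that the minimizer of~(\ref{eq-main3}) is only unique up to right multiplication by an $i\times i$ orthogonal matrix when the selected eigenvalues are distinct, and also up to rotation within eigenspaces in the case of multiplicities; this ambiguity is harmless here because only the column span of $\bG_E$ (equivalently, $P_\G$) enters the objective, as noted just before~(\ref{eq-main3}).
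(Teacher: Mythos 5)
Your proposal is correct and follows essentially the same route as the paper's proof: reduce the fixed-rank inner problem to trace minimization on the Stiefel manifold (the Ky Fan result cited via \cite{kokiopoulou2011trace}), obtaining $\beta_1+\cdots+\beta_i$, and then choose the optimal rank by noting that the partial sums decrease exactly while the next eigenvalue is negative, giving $i=\min\{r,j\}$. Your extra details (the diagonalization/row-norm argument, the $j=0$ corner case, and the rotation-invariance of the minimizer) simply flesh out the same argument the paper states more tersely.
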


\begin{proof}
Consider the inner optimization problem of~(\ref{eq-main3}) in~(\ref{eq-main2}). Using the trace optimization problems and their solutions in~\cite{kokiopoulou2011trace}, we get
\begin{equation}
\min_{\bG_E^T \bG_E = \bI_i} J_\lambda(\bG_E) \,=
\min_{\bG_E^T \bG_E = \bI_i}\tr{\big[\bG_E^T \bB \bG_E\big]}\,=\beta_1 + \beta_{2}+\cdots +\beta_{i}\nn,
\end{equation}
where $\beta_1,\, \beta_{2},\,\dots,\, \beta_{i}$ are $i$ smallest eigenvalues of $\bB$ and minimum value can be achieved by the matrix $\bV$ whose columns are corresponding eigenvectors.
If the number of negative eigenvalues of $\bB$ is less than $r$, then the optimum $i$ in~(\ref{eq-main2}) is $j$, otherwise the optimum $i$ is $r$.
\end{proof}

\section{Empirical Moments Based Solution to Linear Encoder\label{sec:linear}}
In many practical scenarios, we only have access to data samples but not to the true mean vectors and covariance matrices. Therefore, the solution in Section \ref{sec-linear} might not be feasible in such as case. In this Section, we provide an approach to solve the optimization problem in Section \ref{sec-linear} which relies on empirical moments and is valid even if the covariance matrix $\bC_x$ is not full-rank.

Firstly, for a given $\bt_E$, we find
\[
\J_y = \min_{\bW_y,\bb_y} \mse\,(\hat \y - \y).
\]
Note that the above optimization problem can be separated over $\bW_y$, $\bb_y$. Therefore, for a given $\bW_y$, we first minimize over $\bb_y$:
\begin{equation}
  \begin{aligned}
    &\min_{\bb_y}\E\Big\{\big\|\bW_y \bt_E \x+\bb_y-\y\big\|^2\Big\}\\
    &=\min_{\bb_y}\frac{1}{n}\sum_{k=1}^n \big\|\bW_y \bt_E \x_k+\bb_y-\y_k\big\|^2\\
    &=\frac{1}{n}\sum_{k=1}^n \big\|\bW_y \bt_E \x_k+\bc-\y_k\big\|^2\nn
  \end{aligned}
\end{equation}
where we used empirical expectation in the second stage and the minimizer $\bc$ is
\begin{equation}
  \label{c-linear}
  \begin{aligned}
    \bc &= \frac{1}{n} \sum_{k=1}^n \Big(\y_k - \bW_y \bt_E \x_k\Big)\\
    &= \frac{1}{n} \sum_{k=1}^n \y_k -  \bW_y \bt_E  \frac{1}{n} \sum_{k=1}^n  \x_k\\
    &= \E\big\{ \y \big\} - \bW_y \bt_E \,\E\big\{ \x \big\}
\end{aligned}
\end{equation}

Let all the columns of matrix $\bC$ be equal to $\bc$. We now have,
\begin{equation}
  \begin{aligned}
    J_y &= \min_{\bW_y,\bb_y} \mse\,(\hat \y - \y)\nn\\
    &= \min_{\bW_y} \frac{1}{n}\big\| \bW_y \bt_E \bX + \bC-\bY \big\|_F^2\nn\\
    &= \min_{\bW_y} \frac{1}{n}\big\| \bW_y \bt_E\tilde{\bX} -\tilde{\bY} \big\|_F^2\nn\\
    &= \min_{\bW_y} \frac{1}{n}\big\| \tilde{\bX}^T \bt_E^T \bW_y^T - \tilde{\bY}^T \big\|_F^2\nn\\
    &= \min_{\bW_y} \frac{1}{n}\big\| \bM \bW_y^T- P_\M \tilde {\bY}^T  \big\|_F^2 + \frac{1}{n}\big\| P_{\M^\perp} \tilde{\bY}^T \big\|_F^2 \nn\\
    &= \frac{1}{n}\big\| \underbrace{\bM \bM^\dagger}_{P_\M} P_\M \tilde {\bY}^T- P_\M \tilde {\bY}^T  \big\|_F^2 + \frac{1}{n}\big\| P_{\M^\perp} \tilde{\bY}^T \big\|_F^2 \nn\\
    &= \frac{1}{n}\big\| P_{\M^\perp} \tilde{\bY}^T \big\|_F^2\nn\\
    &= \frac{1}{n}\big\| \tilde{\bY}^T \big\|_F^2-\frac{1}{n}\big\| P_\M \tilde{\bY}^T \big\|_F^2\nn
\end{aligned}
\end{equation}
where in the third step we used~(\ref{c-linear}), $\bM=\tilde{\bX} ^T \bt_E^T$ and the fifth step is due to orthogonal decomposition. Using the same approach, we get
\begin{equation}
  \label{alpha}
\J_s = \frac{1}{n}\big\| \tilde{\bS}^T \big\|_F^2 - \frac{1}{n}\big\| P_\M \tilde{\bS}^T \big\|_F^2
\end{equation}

Now, assume that the columns of $\bL_x$ are orthogonal basis for the column space of $\tilde{\bX}^T$. Therefore, for any $\bM$, there exists a $\bG_E$ such that $\bL_x \bG_E =\bM$. In general, there is no bijection between $\bt_E$ and $\bG_E$ in the equality $\tilde{\bX}^T \bt_E^T = \bL_x \bG_E$. But, there is a bijection between $\bG_E$ and $\bt_E$ when constrained to $\bt_E$'s in which $\mathcal R(\bt_E^T)\subseteq {\N(\tilde{\bX}^T)}^\perp$. This restricted bijection is sufficient to be considered, since for any $\bt_E^T\in \N(\tilde{\bX}^T)$ we have $\bM=\bf0$. Once $\bG_E$ is determined, $\bt_E^T$ can be obtained as,
\[
\bt_E^T =  (\tilde{\bX}^T)^\dagger \bL_x \bG_E + \bt_0, \ \ \bt_0\subseteq \N(\tilde{\bX}^T).
\]
However, since
\[
\big\|\bt_E\big\|_F^2=\big\|\bt_E^T\big\|_F^2 = \big\|(\tilde{\bX}^T)^\dagger \bL_x \bG_E\big\|_F^2 + \big\|\bt_0\big\|_F^2,
\]
choosing $\bt_0 = \bf 0$ results in minimum $\big\|\bt_E\big\|_F$, which is favorable in terms of robustness to noise.

By choosing $\bt_0 = \bf 0$, determining the encoder $\bt_E$ would be equivalent to determining $\bG_E$. Similar to~(\ref{A-G}), we have $P_\M = \bL_x P_\G \bL_x^T$. If we assume that the rank of $P_\G$ is $i$, $J_\lambda(\bG_E)$ in~(\ref{eq-main3}) can be expressed as,
\begin{equation}
J_\lambda (\bG_E) = \lambda \big\| \bL_x \bG_E \bG_E^T \bL_x^T \tilde{\bS}^T\big\|_F^2 - (1-\lambda) \big\| \bL_x \bG_E \bG_E^T \bL_x^T \tilde{\bY}^T\big\|_F^2\nn
\end{equation}
where $\bG_E \bG_E^T=P_\G $ for some orthogonal matrix $\bG_E\in \R^{d\times i}$. This resembles the optimization problem in (\ref{eq-main2}) and therefore it has the same solution as Theorem~\ref{th3} with modified $\bB$ given by
\begin{equation}
  \label{B-linear}
  \bB = \bL_x^T \Big(\lambda \tilde{\bS}^T \tilde{\bS} -(1-\lambda)\tilde{\bY}^T \tilde{\bY} \Big) \bL_x
\end{equation}
Once $\bG_E$ is determined, $\bt_E$ can be obtained as $\bG_E^T \bL_x^T (\tilde{\bX})^\dagger$. Algorithm~\ref{alg1} summarizes our entire solution for the constrained optimization problem in~(\ref{eq-constrained}) through the solution of the Lagrangian version in~(\ref{eq-main}).

\algdef{SE}[DOWHILE]{Do}{doWhile}{\algorithmicdo}[1]{\algorithmicwhile\ #1}
\begin{algorithm}[t]
    \caption{Spectral Adversarial Representation Learning\label{alg1}}
    \begin{algorithmic}[1] % The number tells where the line numbering should start
            \State \textbf{Input:} data $\bX$, target labels $\bY$, sensitive labels $\bS$, tolerable leakage $\alpha_\min\le\alpha_{\mathrm{tol}}\le\alpha_\max$, $\epsilon$
            \State \textbf{Output:} linear encoder parameters $\bt_E$
            \State $\bL_x \gets  \text{orthonormalize basis of }\tilde{\bX}^T$
             \State Initiate $\lambda = 1/2$, $\lambda_\min=0$ and $\lambda_\max=1$
            \Do
                \State Calculate $\bB$ in~(\ref{B-linear})
                \State $\bG_E \gets$ eigenvectors of negative eigenvalues of $\bB$
                \State $\bt_E \gets \bG_E^T \bL_x^T (\tilde{\bX})^\dagger$~
                \State Calculate $\alpha$ using~(\ref{alpha})
                \If{$\alpha<(\alpha_{\mathrm{tol}}-\epsilon)$} {$\lambda_\min=\lambda$ and $\lambda \gets  (\lambda+\lambda_\max)/2$}
                \ElsIf{$\alpha>(\alpha_{\mathrm{tol}}+\epsilon)$}  {$\lambda_\max = \lambda$ and $\lambda \gets  (\lambda+\lambda_\min)/2$}
                \EndIf
            \doWhile {$\big|\alpha - \alpha_{\mathrm{tol}}\big|\ge \epsilon$}
    \end{algorithmic}
\end{algorithm}

\section{Non-linear Extension Through Kernelization\label{sec:kernel}}
\begin{figure}[h]
    \centering
    \includegraphics[width=0.6\textwidth]{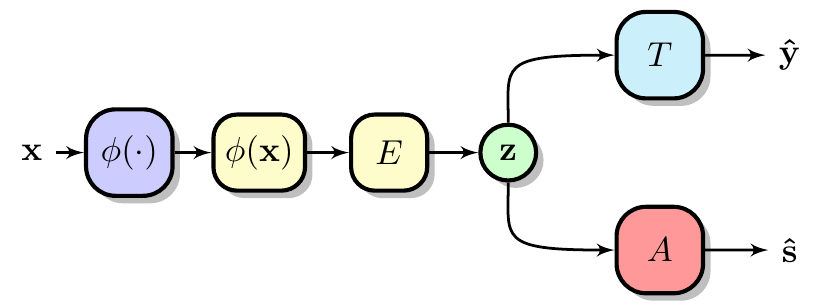}
    \caption{\textbf{Kernelized Adversarial Representation Learning} consists of four entities, a kernel $\phi_x(\cdot)$, an encoder $E$ that obtains a compact representation $\bm{z}$ of the mapped input data $\phi_x(\bm{x})$, a predictor $T$ that predicts a desired target attribute $\bm{y}$ and an adversary that seeks to extract a sensitive attribute $\bm{s}$, both from the embedding $\bm{z}$.\label{fig:kernel-arl}}
\end{figure}

We assume that $\x$ is non-linearly mapped to $\phi_x(\x)$ as illustrated in Figure~\ref{fig:kernel-arl}. From the representer theorem (see\cite{shawe2004kernel}), we note that $\bt_E$ can be expressed as $\bt_E=\bl \tilde{\bp}_x^T$. Consequently the embedded representation $\bm{z}$ can be computed as,
\[
\z = \bt_E \phi_x(\x) = \bl \tilde{\bp}_x^T \phi_x(\x) = \bl \bD^T[k_x(\x_1,\x),\cdots,k_x(\x_n,\x)]^T
\]

\subsection{Learning}
First, for a given fixed $\bt_E$, we find
\[
\J_y = \min_{\bW_y,\bb_y} \mse\,(\hat \y - \y).
\]
Note that the above optimization problem can be separated over $\bW_y$, $\bb_y$. Therefore, for a given $\bW_y$, we first minimize over $\bb_y$:
\begin{equation}
  \begin{aligned}
    &\min_{\bb_y}\E\Big\{\big\|\bW_y \bt_E \phi_x(\x)+\bb_y-\y\big\|^2\Big\}\\
    &=\min_{\bb_y}\frac{1}{n}\sum_{k=1}^n \big\|\bW_y \bt_E \phi_x(\x_k)+\bb_y-\y_k\big\|^2\\
    &=\frac{1}{n}\sum_{k=1}^n \big\|\bW_y \bt_E \phi_x(\x_k)+\bc-\y_k\big\|^2\nn
  \end{aligned}
\end{equation}
where the minimizer $\bc$ is,
\begin{equation}
  \label{c-kernel}
  \begin{aligned}
    \bc &= \frac{1}{n} \sum_{k=1}^n \Big(\y_k - \bW_y \bt_E \phi_x(\x_k)\Big)\\
    &= \frac{1}{n} \sum_{k=1}^n \y_k -  \bW_y \bt_E  \frac{1}{n} \sum_{k=1}^n \phi_x(\x_k)\\
    &= \E\big\{ \y \big\} - \bW_y \bt_E \,\E\big\{ \phi_x(\x) \big\}
\end{aligned}
\end{equation}
Let all the columns of $\bC$ be equal to $\bc$. Therefore we now have,
\begin{equation}
  \label{J_y-ker}
  \begin{aligned}
    &\min_{\bW_y,\bb_y} \mse\,(\hat \y - \y)\\
    &= \min_{\bW_y} \frac{1}{n}\big\| \bW_y \bt_E \bp_x + \bC-\bY \big\|_F^2\\
    &= \min_{\bW_y} \frac{1}{n}\big\| \bW_y \bt_E\tilde{\bp}_x -\tilde{\bY} \big\|_F^2\\
    &= \min_{\bW_y} \frac{1}{n}\big\| \tilde{\bp}_x^T \bt_E^T \bW_y^T - \tilde{\bY}^T \big\|_F^2\\
    &= \min_{\bW_y} \frac{1}{n}\big\| \bM \bW_y^T- P_\M \tilde {\bY}^T  \big\|_F^2 + \frac{1}{n}\big\| P_{\M^\perp} \tilde{\bY}^T \big\|_F^2\\
    &= \frac{1}{n}\big\| \underbrace{\bM \bM^\dagger}_{P_\M} P_\M \tilde {\bY}^T - P_\M \tilde {\bY}^T  \big\|_F^2 + \frac{1}{n}\big\| P_{\M^\perp} \tilde{\bY}^T \big\|_F^2\\
    &= \frac{1}{n}\big\| P_{\M^\perp} \tilde{\bY}^T \big\|_F^2\\
    &= \frac{1}{n}\big\| \tilde{\bY}^T \big\|_F^2-\frac{1}{n}\big\| P_\M \tilde{\bY}^T \big\|_F^2
  \end{aligned}
\end{equation}
where the third step is due to~(\ref{c-kernel}), $\bM=\tilde{\bp}_x ^T \bt_E^T$ and the fifth step is the orthogonal decomposition w.r.t. $\bM$. Using the same approach, we get
\begin{equation}
  \label{J_s-ker}
  \J_s = \frac{1}{n}\big\| \tilde{\bS}^T \big\|_F^2 - \frac{1}{n}\big\| P_\M \tilde{\bS}^T \big\|_F^2
\end{equation}

Finding optimal $\bt_E$ is equivalent to finding optimal $\bl_E$ (since $\bt_E = \bl_E \tilde{\bp}_x^T $) where we would have $\bM= \tilde{\bp}_x ^T \tilde{\bp}_x \bl^T_E= \tilde{\bK}_x \bl^T_E$. Now, assume that the columns of $\bL_x$ are orthogonal basis for the column space of $\tilde{\bK}_x$. As a result, for any $\bM$, there exists $\bG_E$ such that $\bL_x \bG_E =\bM$. In general, there is no bijection between $\bl_E$ and $\bG_E$ in the equality $\tilde{\bK}_x \bl^T_E = \bL_x \bG_E$. But, there is a bijection between $\bG_E$ and $\bl_E$ when constrained to $\bl_E$'s in which $\mathcal R(\bl^T_E)\subseteq {\N(\tilde{\bK}_x)}^\perp$. This restricted bijection is sufficient, since for any $\bl^T_E\in \N(\tilde{\bK}_x)$ we have $\bM=\bf0$. Once $\bG_E$ is determined, $\bl^T_E$ can be obtained as,
\[
\bl^T_E =  (\tilde{\bK}_x)^\dagger \bL_x \bG_E + \bl_0, \ \ \bl_0\subseteq \N(\tilde{\bK}_x)
\]
However, since
\[
\big\|\bl_E\big\|_F^2=\big\|\bl^T_E\big\|_F^2 = \big\|(\tilde{\bK}_x)^\dagger \bL_x \bG_E\big\|_F^2 + \big\|\bl_0\big\|_F^2,
\]
choosing $\bl_0 = \bf 0$ results in minimum $\big\|\bl_E\big\|_F$, which is favorable in terms of robustness to the noise. Similar to~(\ref{A-G}), we have $P_\M = \bL_x P_\G \bL_x^T$. If we assume that the rank of $P_\G$ is $i$, $J_\lambda(\bG_E)$ in~(\ref{eq-main3}) can be expressed as,
\begin{equation}
  \label{main-ker}
  J_\lambda(\bG_E) = \lambda \big\| \bL_x \bG_E \bG_E^T \bL_x^T \tilde{\bS}^T\big\|_F^2 - (1-\lambda) \big\| \bL_x \bG_E \bG_E^T \bL_x^T \tilde{\bY}^T\big\|_F^2\nn
\end{equation}
where $ P_\G = \bG_E \bG_E^T\nn $ for some orthogonal matrix $\bG_E\in \R^{d\times i}$. This resembles the optimization problem in (\ref{eq-main2}) and therefore have the same solution as Theorem~\ref{th3} with modified $\bB$ as,
\begin{equation}
  \label{B-ker}
  \bB = \bL_x^T \Big(\lambda \tilde{\bS}^T \tilde{\bS} -(1-\lambda)\tilde{\bY}^T \tilde{\bY} \Big) \bL_x
\end{equation}
Once $\bG_E$ is determined, $\bl_E$ can be computed as $\bG_E^T \bL_x^T (\tilde{\bK}_x^T)^\dagger$. Algorithm~\ref{alg1} summarizes our entire solution (replacing $\tilde{\bX}$ by $\tilde{\bK}_x^T$ in steps $3$ and $8$) if one wishes to consider the constrained optimization problem in~(\ref{eq-constrained}) instead of unconstrained Lagrangian version in~(\ref{eq-main}). It is worth of mentioning that the objective function $J_\lambda (\bG_E)$ is neither convex nor differentiable. The proof is exactly the same as Theorem~\ref{th3}.

\section{Proof of Lemma~4\label{sec:lemma4}}

\begin{lemma}
Let the columns of $\bL_x$ be the orthonormal basis for $\tilde{\bK}_x$ (in linear case $\tilde{\bK}_x = \tilde{\bX}^T \tilde{\bX}$). Further, assume that the columns of $\bV_s$ are the singular vectors corresponding to zero singular values of $\tilde{\bS}\bL_x$
and the  columns of $\bV_y$ are the singular vectors corresponding to non-zero singular values of $\tilde{\bY}\bL_x$.
Then, we have
\begin{align}
    \gamma_\min = &\min_{\bt_E}J_y(\bt_E)\nn\\
    = &\frac{1}{n}\big\|\tilde{\bY}^T\big\|_F^2-\frac{1}{n}{\|\tilde{\bY}\bL_x\|_F^2} \nn\\
        \gamma_\max = &\min_{\arg \max J_s(\bt_E)} J_y(\bt_E)\nn\\
        = &\frac{1}{n}\big\|\tilde{\bY}^T\big\|_F^2 - \frac{1}{n} \big\| \tilde{\bY} \bL_x \bV_s\big\|_F^2\nn\\
    \alpha_\min = &\max_{\arg \min J_y(\bt_E)} J_s(\bt_E)\nn\\
    = &\frac{1}{n}\big\|\tilde{\bS}^T\big\|_F^2 - \frac{1}{n} \big\| \tilde{\bS} \bL_x \bV_y\big\|_F^2\nn\\
    \alpha_\max = &\max_{\bt_E} J_s(\bt_E)\nn\\
    = &\frac{1}{n}\big\|\tilde{\bS}^T\big\|_F^2\nn.
\end{align}
\end{lemma}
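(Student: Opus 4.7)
\textbf{Proof proposal for Lemma~\ref{th4}.} The plan is to unify the analysis across the linear and kernel cases by working in the common form derived in Sections~\ref{sec:linear} and~\ref{sec:kernel}. Recall that for any encoder we have
\[
J_y(\bt_E)=\tfrac1n\|\tilde{\bY}^T\|_F^2-\tfrac1n\|P_\M\tilde{\bY}^T\|_F^2,\qquad
J_s(\bt_E)=\tfrac1n\|\tilde{\bS}^T\|_F^2-\tfrac1n\|P_\M\tilde{\bS}^T\|_F^2,
\]
with $P_\M=\bL_x P_\G \bL_x^T$ where $\bL_x$ has orthonormal columns and $P_\G$ is an arbitrary orthogonal projector whose range $\G$ sits inside the column space of $\bL_x^T$. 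Since $\bL_x^T\bL_x=\bI$, the identity $\|\bL_x P_\G \bL_x^T \bT^T\|_F^2=\|P_\G \bL_x^T\bT^T\|_F^2$ reduces every one of the four optimization problems to choosing an orthogonal projector $P_\G$ that maximizes or minimizes $\|P_\G \bL_x^T\bT^T\|_F^2$ for $\bT\in\{\bY,\bS\}$, possibly subject to a projection-side constraint. This reduction is the main technical mechanism of the proof.

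First I would handle the two unconstrained extremes. For $\gamma_\min$, maximizing $\|P_\G \bL_x^T\tilde{\bY}^T\|_F^2$ over projectors is achieved by taking $P_\G$ to be the identity on the ambient space (or equivalently the projector onto the column span of $\bL_x^T\tilde{\bY}^T$), giving $\|\bL_x^T\tilde{\bY}^T\|_F^2=\|\tilde{\bY}\bL_x\|_F^2$. For $\alpha_\max$, minimizing $\|P_\G\bL_x^T\tilde{\bS}^T\|_F^2$ is trivially achieved by $P_\G=0$ (the degenerate encoder), yielding the zero subtraction. These two cases deliver the first and fourth identities of the lemma directly.

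Next I would address the two constrained bounds by translating the extremal conditions into subspace containment conditions. For $\gamma_\max$, the constraint $\bt_E\in\arg\max J_s$ forces $\|P_\G\bL_x^T\tilde{\bS}^T\|_F^2=0$, i.e.\ the range of $P_\G$ must lie in the null space of $\tilde{\bS}\bL_x$, which is spanned by the right-singular vectors $\bV_s$ associated with the zero singular values. Maximizing $\|P_\G\bL_x^T\tilde{\bY}^T\|_F^2$ subject to $\mathrm{range}(P_\G)\subseteq\mathrm{span}(\bV_s)$ is solved by $P_\G=\bV_s\bV_s^T$, producing $\|\tilde{\bY}\bL_x\bV_s\|_F^2$. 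Dually, for $\alpha_\min$ the constraint $\bt_E\in\arg\min J_y$ forces the range of $P_\G$ to contain the column span of $\bL_x^T\tilde{\bY}^T$, equivalently the span of $\bV_y$. Minimizing $\|P_\G\bL_x^T\tilde{\bS}^T\|_F^2$ subject to $\mathrm{range}(P_\G)\supseteq\mathrm{span}(\bV_y)$ is attained by the tightest admissible projector, $P_\G=\bV_y\bV_y^T$, giving $\|\tilde{\bS}\bL_x\bV_y\|_F^2$.

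The hard part, in my view, is not the algebraic manipulation but rigorously justifying the two subspace-containment reductions: showing that the set of projectors achieving $\max J_s$ is exactly the set whose range lies in $\ker(\tilde{\bS}\bL_x)$, and symmetrically that the set achieving $\min J_y$ is exactly the set whose range contains $\mathrm{range}(\bL_x^T\tilde{\bY}^T)$. I would argue each by writing $\|P_\G\bL_x^T\bT^T\|_F^2=\sum_i\sigma_i^2\|P_\G \bu_i\|^2$ using the SVD $\tilde{\bT}\bL_x=\bU\bm{\Sigma}\bV^T$ and exploiting $0\le\|P_\G\bu_i\|^2\le 1$, so that the extremum is attained iff $P_\G\bu_i$ equals $0$ (resp.\ $\bu_i$) for every index with $\sigma_i>0$. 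Combining this characterization with the two easy cases completes the proof.
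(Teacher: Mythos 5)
Your proposal is correct and follows essentially the same route as the paper: both reduce each bound to an optimization over the projector $P_\G$ via $P_\M=\bL_x P_\G\bL_x^T$ and $\bL_x^T\bL_x=\bI$, take the full/zero projector for $\gamma_\min$ and $\alpha_\max$, and settle the two constrained bounds by the containment characterizations that yield $P_\G=\bV_s\bV_s^T$ for $\gamma_\max$ and $P_\G=\bV_y\bV_y^T$ for $\alpha_\min$ (the paper phrases this through the trace-optimization results of Kokiopoulou et al.\ and a monotonicity argument, while your SVD expansion $\sum_i\sigma_i^2\|P_\G\bv_i\|^2$ makes the same characterization slightly more explicit). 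The only nit is notational: with the SVD $\tilde{\bY}\bL_x=\bU\Sigma\bV^T$ the relevant vectors in that expansion are the \emph{right} singular vectors $\bv_i$ (consistent with the lemma's $\bV_y,\bV_s$), not the $\bu_i$ you wrote.
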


\begin{proof}
Firstly, we recall from Section~\ref{sec:kernel} that instead of $\bl_E$, we consider $\bG_E$. These two matrices are related to each other as, $\tilde{\bK}_x\bl_E^T=\bL_x \bG_E=\bM$, where the columns of $\bL_x$ are the orthogonal basis for the column space of $\tilde{\bK}_x$. Therefore we can now express the projection on to $\M$ in terms of projection onto $\G$, i.e.,$P_\M = \bL_x P_\G \bL_x$. Using~(\ref{J_y-ker}), we get
\begin{equation}
  \label{eq-gamma-min}
  \begin{aligned}
    \gamma_\min &= \frac{1}{n}\big\|\tilde{\bY}^T\big\|_F^2-\frac{1}{n}\max_{\bt_E}{\big\|P_\M \tilde{\bY}^T\big\|_F^2}\\
    &= \frac{1}{n}\big\|\tilde{\bY}^T\big\|_F^2-\frac{1}{n}\max_{\bG_E}{\big\|\bL_x P_{\G_E} \bL_x^T \tilde{\bY}^T\big\|_F^2}\\
    &= \frac{1}{n}\big\|\tilde{\bY}^T\big\|_F^2 -\frac{1}{n}\max_i \Big\{\max_{\bG_E^T\bG_E=\bI_i}\tr\big[\bG_E^T \bL_x^T \tilde{\bY}^T \tilde{\bY}  \bL_x \bG_E \big]\Big\}\\
    &= \frac{1}{n}\big\|\tilde{\bY}^T\big\|_F^2 -\frac{1}{n}\tr\big[\bV_y^T \bL_x^T \tilde{\bY}^T \tilde{\bY}  \bL_x \bV_y \big]\\
    &= \frac{1}{n}\big\|\tilde{\bY}^T\big\|_F^2-\frac{1}{n}\sum_k \sigma_k^2\\
    &= \frac{1}{n}\big\|\tilde{\bY}^T\big\|_F^2-\frac{1}{n}\sum_{\sigma_k>0} \sigma_k^2\\
    &= \frac{1}{n}\big\|\tilde{\bY}^T\big\|_F^2-\frac{1}{n}{\big\|\tilde{\bY} \bL_x\big\|_F^2}
\end{aligned}
\end{equation}
where the fourth step is borrowed from trace optimization problems studied in~\cite{kokiopoulou2011trace} and $\sigma_k$'s are the singular values of $\tilde{\bY}\bL_x$.

In order to better interpret the bounds, we consider the one-dimensional case where $\x, \y, \in \R$. In this setting, the correlation coefficient (denoted by $\rho(\cdot, \cdot)$) between $\x$ and $\y$ is,
\begin{equation}
  \begin{aligned}
    \rho (\x,\y) &= \frac{\tilde{\bY } \tilde{\bX}^T}{\sqrt{\tilde{\bY}\tilde{\bY}^T \tilde{\bX}\tilde{\bX}^T}}\\
    &=\frac{\|\tilde{\bY}\bL_x\|_F}{\sigma_y}\\
    &=\sqrt{1-\frac{\gamma_\min}{\sigma_y^2}},
  \end{aligned}
\end{equation}
where $\sigma_y^2 = \| \tilde{\bY}\|^2_F/n$. As a result, the normalized MSE can be expressed as,
\begin{equation}
    \frac{\gamma_\min}{\sigma_y^2} = 1-\rho^2(\x,\y)
\end{equation}
Therefore, the lower bound of the target's MSE is independent of the encoder and is instead only related to the alignment between the subspaces spanned by the data and labels.

Next, we find an encoder which allows the target task to obtain its optimal loss, $\gamma_\min$, while seeking to minimize the leakage of sensitive attributes as much as possible. Thus, we constrain the domain of the encoder to $\{\arg\min J_y(\bt_E)\}$. Assume that the columns of the encoder $\bG_E$ is the concatenation of the columns of $\bV_y$ together with at least one singular vector corresponding to a zero singular value of $\tilde{\bY}\bL_x$. Therefore $\mathcal{V}_y \subseteq \mathcal{G}_E$ and consequently $\|\bL_x P_{\mathcal{V}_y} \bL_x^T {\bf U}\|^2_F \le \|\bL_x P_{\mathcal{G}} \bL_x^T {\bf U}\|^2_F$ for arbitrary matrix ${\bf U}$. As a result, $J_s(\bG_E) \ge J_s(\bV_y)$ and at the same time $J_y(\bG_E) = J_y(\bV_y)$. The latter can be observed from,
\begin{equation}
  \begin{aligned}
    \big\|\bL_x P_{\G_E} \bL_x^T \tilde{\bY}^T\big\|_F^2 &= \big\|\tilde{\bY}\bL_x P_{\G_E} \bL_x^T \big\|_F^2\\
    &= \big\|\tilde{\bY}\bL_x \bG_E \bG_E^T \bL_x^T \tilde{\bY}^T\big\|_F^2\\
    &= \big\|\tilde{\bY}\bL_x \bV_y \bV_y^T \bL_x^T \big\|_F^2\\
    &= \big\|\bL_x P_{\mathcal{V}_y} \bL_x^T \tilde{\bY}^T\big\|_F^2
  \end{aligned}
\end{equation}
We then have,
\begin{equation}
  \begin{aligned}
    \alpha_{\min} &= \frac{1}{n}\big\|\tilde{\bS}^T\big\|_F^2-\frac{1}{n}{\big\|\bL_x P_{\mathcal{V}_y} \bL_x^T\tilde{\bS}^T\big\|_F^2}\\
    &= \frac{1}{n}\big\|\tilde{\bS}^T\big\|_F^2 -\frac{1}{n}\tr\big[\bV_y^T \bL_x^T \tilde{\bS}^T \tilde{\bS} \bL_x \bV_y \big]\\
    &= \frac{1}{n}\big\|\tilde{\bS}^T\big\|_F^2 - \frac{1}{n} \big\| \tilde{\bS} \bL_x \bV_y\big\|_F^2
  \end{aligned}
\end{equation}

This bound can again be interpreted under the one-dimensional setting of $\x,\s \in \R$ as,
\begin{equation}
    \frac{\alpha_\min}{\sigma_s^2} = 1- \rho^2(\x,\s)
\end{equation}

On the other hand, $\alpha_\max$ turns out to be,
\begin{equation}
  \begin{aligned}
    \alpha_\max &= \frac{1}{n}\big\|\tilde{\bS}^T\big\|_F^2 \\
    &= \sigma_s^2
\end{aligned}
\end{equation}
which can be achieved via trivial choice of $\bG_E = 0$. However, we let the columns of $\bG_E$ be the singular vectors corresponding to all zero singular values of $\tilde{\bS}\bL_x$ in order to maximize $\big\|P_\M \tilde{\bY}^T\big\|_F$ and at the same time ensuring that $J_s(\bG_E)$ equal to $\alpha_\max$. As a result, we have
\begin{equation}
\gamma_\max = \frac{1}{n}\big\|\tilde{\bY}^T\big\|_F^2 -\frac{1}{n}\big\| \tilde{\bY} \bL_x \bV_s\big\|_F^2\nn.
\end{equation}
For the one dimensional case i.e., $\x,\y,\s \in \R$, we get $V_s =0$ and consequently,
\begin{equation}
    \gamma_\max = \sigma_y^2
\end{equation}
\end{proof}

\end{appendices}

\end{document}